\pgfplotsset{compat=newest}
\newtheorem*{rep@theorem}{\rep@title}
\newcommand{\newreptheorem}[2]{%
\newenvironment{rep#1}[1]{%
 \def\rep@title{#2 \ref{##1}}%
 \begin{rep@theorem}}%
 {\end{rep@theorem}}}
\definecolor{myred}{RGB}{215,48,39}
\definecolor{mygreen}{RGB}{26,152,80}
\newcommand{\cmark}{\textcolor{mygreen}{\ding{51}}}
\newcommand{\xmark}{\textcolor{myred}{\ding{55}}}
\newcommand{\halfmark}{\textcolor{gray}{\checkmark\kern-1.1ex\raisebox{.7ex}{\rotatebox[origin=c]{125}{--}}}}
\newcommand{\RNum}[1]{\uppercase\expandafter{\romannumeral #1\relax}}
\newcommand{\R}{\mathcal{R}}
\newcommand{\bx}{\mathbf{x}}
\newcommand{\vertiii}[1]{{\left\vert\kern-0.25ex\left\vert\kern-0.25ex\left\vert #1 
    \right\vert\kern-0.25ex\right\vert\kern-0.25ex\right\vert}}
\newcommand{\vertiiii}[1]{{\vert\kern-0.25ex\vert\kern-0.25ex\vert #1 
    \vert\kern-0.25ex\vert\kern-0.25ex\vert}}
\newcommand{\xhdr}[1]{{\noindent\bfseries #1}.}
\newcommand{\cut}[1]{}
\newcommand{\removelatexerror}{\let\@latex@error\@gobble}
\def\eqref#1{Eq.~\ref{#1}}
\def\1{\bm{1}}
\def\rvx{{\mathbf{x}}}
\DeclareMathAlphabet{\mathsfit}{\encodingdefault}{\sfdefault}{m}{sl}
\SetMathAlphabet{\mathsfit}{bold}{\encodingdefault}{\sfdefault}{bx}{n}
\def\gB{{\mathcal{B}}}
\def\gE{{\mathcal{E}}}
\def\gL{{\mathcal{L}}}
\def\gP{{\mathcal{P}}}
\def\gS{{\mathcal{S}}}
\def\gU{{\mathcal{U}}}
\def\gV{{\mathcal{V}}}
\def\gX{{\mathcal{X}}}
\def\gZ{{\mathcal{Z}}}
\def\R{{\mathbb{R}}}
\newcommand{\pdata}{p_{\rm{data}}}
\newcommand{\KL}{\mathbb{D}_{\mathrm{KL}}}
\newcommand{\Var}{\mathrm{Var}}
\newcommand{\name}{\textsc{Discrete Denoising Posterior Prediction}\xspace}
\newcommand{\nameshort}{\textsc{DDPP}\xspace}
\renewcommand*{\backrefalt}[4]{%
    \ifcase #1 \footnotesize{(Not cited.)}%
    \or        \footnotesize{(Cited on page~#2)}%
    \else      \footnotesize{(Cited on pages~#2)}%
    \fi}
\newcolumntype{P}[1]{>{\centering\arraybackslash}p{#1}}
\title{Steering Masked Discrete Diffusion Models via Discrete Denoising Posterior Prediction}
\author{
Jarrid Rector-Brooks$^{1,2,3}$, Mohsin Hasan$^{1,2}$, Zhangzhi Peng$^4$, Zachary Quinn$^4$, \\
\bf{Chenghao Liu}$^{2,3,5}$, Sarthak Mittal$^{1,2}$, Nouha Dziri$^6$, Michael Bronstein$^{3,7,8}$, \\
\bf{Yoshua Bengio$^{1,2}$, Pranam Chatterjee$^{3,4}$, Alexander Tong$^{1,2,3}$\thanks{Equal advising}, Avishek Joey Bose$^{2, 3,7}$\footnotemark[1]} \\
$^1$Universit\'e de Montr\'eal, $^2$Mila, $^3$Dreamfold, $^4$Duke University, $^5$McGill University, \\
$^6$Allen Institute for AI, $^7$Oxford University, $^8$Aithyra
}
\begin{document}

\maketitle

\begin{abstract}

\looseness=-1
Generative modeling of discrete data underlies important applications spanning text-based agents like ChatGPT to the design of the very building blocks of life in protein sequences. 
However, application domains need to exert control over the
generated data by steering the generative process—typically via RLHF—to satisfy a specified property, reward, or affinity metric. In this paper, we study the problem of steering Masked Diffusion Models (MDMs), a recent class of discrete diffusion models that offer a compelling alternative to traditional autoregressive models. We introduce \name (\nameshort), a novel framework that casts the task of steering pre-trained MDMs as a problem of probabilistic inference by learning to sample from a target Bayesian posterior. Our \nameshort framework leads to a family of three novel objectives that are all simulation-free, and thus scalable while applying to general non-differentiable reward functions. Empirically, we instantiate DDPP by steering MDMs to perform class-conditional pixel-level image modeling, RLHF-based alignment of MDMs using text-based rewards, and finetuning protein language models to generate more diverse secondary structures and shorter proteins. We substantiate our designs via wet-lab validation, where we observe transient expression of reward-optimized protein sequences.

\cut{
\looseness=-1
Generative modeling of discrete structures is one of the hallmarks of scalable deep learning, driving numerous successful downstream applications such as text and protein sequences. Application domains, however, need to exert control over the generated data by steering the generative process to satisfy a specified property, reward, or affinity metric. The dominant paradigm for achieving this is applying Reinforcement Learning from Human Feedback (RLHF), which is designed primarily to steer pre-trained autoregressive models to well-specified rewards. In contrast, an equivalent recipe is lacking for steering and aligning Masked Diffusion Models (MDMs), which are a recent class of performant discrete diffusion models that offer a compelling alternative to traditional autoregressive models. In this paper, we cast the RLHF task of aligning pre-trained Masked Diffusion Models as a probabilistic inference problem of sampling from a target Bayesian posterior which is comprised of the product distribution of the pre-trained MDM and reward distribution. We introduce \name (\nameshort), a novel framework to learn to sample from this challenging reward-induced Bayesian posterior. We propose several strategies to train using \nameshort which include an importance sampling estimate of the partition function, a parametrized lower bound to the partition function, and a reparametrized discrete gradient estimator that bypasses the computation of the partition function. Empirically, we conduct a suite of experiments on various discrete settings including discrete generative modeling over images, wet-lab expression validation of high-quality protein sequences under a protein language model, and RLHF-based alignment using text-based rewards of MDMs.
}

\end{abstract}

\section{Introduction}
\label{sec:introduction}

\looseness=-1
The success of diffusion models in continuous spaces, leading to state-of-the-art foundation models for image~\citep{StablediffXL,MidJourney} and video synthesis~\citep{Villegas2022,videoworldsimulators2024}, has spurned several attempts to translate these approaches for the generative modeling of discrete structures. The most performant approaches squarely fall under the scalable framework of absorbing state discrete diffusion~\citep{austin2021structured}, with new simplified training recipes that result in Masked Diffusion Models (MDMs)~\citep{sahoo2024simple,shi2024simplified,gat2024discrete,zhao2024improving}. Indeed, recent MDMs now rival autoregressive models of a similar scale to GPT-2~\citep{radford2019language} for language modeling, with the potential for further progress through scaling. Furthermore, MDM style models are not constrained to generating data sequentially---unlike autoregressive models---which invites a more straightforward application to domains without a natural causal ordering, e.g.\ molecule generation~\citep{vignac2022digress}, discrete modeling of images~\citep{salimans2017pixelcnn++}, and modeling protein sequences~\citep{lin2022language,wang2024diffusion}. 

\looseness=-1
Critical to the successful deployment of discrete generative models in practical applications---beyond simply producing high-quality samples---is the ability to steer the generated samples to optimize a pre-specified downstream metric. For instance, in Language Modeling (LM) it is desirable to bias the model's generations to be sanitized from harmful responses~\citep{zou2023universal,perez2022red}, or aiming to generate protein sequences that are highly likely to be successfully synthesized and expressed in real wet lab settings~\citep{verkuil2022language,dauparas2022robust}.
Put succinctly, highly performant discrete generative models are required to be aligned in a manner that fine-tuning against downstream reward models has the intended effect of \emph{controllable generation}, wherein the model post fine-tuning selects high-scoring samples from the universe of possible high-fidelity generations. 

\looseness=-1
The standard approach for incorporating steerability into discrete generative models, which are autoregressive, using pre-defined reward models is often framed as a fine-tuning task using reinforcement learning from human feedback (RLHF)~\citep{christiano2017deep,rafailov2024direct}. However, applying RLHF frameworks to diffusion models is far more challenging. Unlike autoregressive models, diffusion models do not allow for straightforward computation of a sample’s exact likelihood without costly simulations. Although fine-tuning diffusion models that bypass exact likelihood computation can yield simulation-free algorithms that resemble RLHF~\citep{wallace2024diffusion,uehara2024understanding}, these methods effectively optimize a loose lower bound to the true RLHF objective, leading to unstable training and suboptimal fine-tuning performance. Consequently, steering diffusion models in continuous spaces is primarily done through inference techniques that leverage the gradient of a conditional model in the form of guidance~\citep{dhariwal2021diffusion,ho2022classifier}. Unfortunately, discrete settings do not allow for principled definitions of guidance due to the lack of a conventional gradient operator. As a result, at present, there exists no scalable and rigorous method to steer and align Masked Diffusion Models to optimize desired reward models.

\looseness=-1
\xhdr{Main contributions} In this paper, we cast the problem of steering a Masked Diffusion Model as a task of probabilistic inference in sampling from a target Bayesian posterior. More precisely, we construct the target Bayesian posterior as being proportional to the product distribution of a base pre-trained MDM model modulated by a pre-specified reward model. Importantly, this sampling viewpoint is fully compatible with classical RLHF for autoregressive models~\citep{uehara2024understanding,zhao2024probabilistic}, but enjoys broader applicability as for the first time it can be applied to discrete diffusion models. Under this sampling perspective, our key insight is that the challenging sampling problem can be solved by learning an amortized sampler, which when taken as an MDM, can be viewed as \emph{finetuning} the pre-trained MDM model by learning to approximate the (reward-induced) Bayesian posterior.

\looseness=-1
We introduce \name (\nameshort), a novel framework that exploits the denoising posterior parametrization inherent to current MDMs to define a series of simpler matching problems across varying corruption (masking) levels of the target Bayesian posterior. In particular, \nameshort designs a forward process that corrupts the Bayesian posterior through a forward masking process and frames the finetuning task as learning another MDM to approximate the corresponding reverse process. As a result, each matching problem in the reverse process requires the construction of a ``denoising" Bayesian posterior that is conditioned on a partially masked sample which we demonstrate is simply proportional to the pre-trained model's own denoising posterior and the (terminal) reward of the fully unmasked sample. 
Crucially, each matching problem in \nameshort can be defined on a particular noise level without running the entire forward (corruption) process. Consequently, this makes \nameshort a \emph{simulation-free} method which is a key ingredient needed to finetune large pre-trained MDMs. We test the empirical caliber of \nameshort by steering MDMs across a multitude of domains ranging from images to protein sequences and steering MDM-based language models. We observe \nameshort fine-tuned MDMs lead to competitive performance on images, transient expression of reward-optimized protein sequences (with high secondary structure diversity and $\beta$-sheet composition) in a wet-lab setting, and natural textual responses that are steered to human sentiments.

\section{Background and preliminaries}
\label{sec:background}

\looseness=-1
\xhdr{Notation and convention}
A discrete data sample $x$ is identified by its realization over a vocabulary set $\gV = \{1, \dots, d-1 \}$, over $d-1$ possible categories. Of particular interest is the setting of masked diffusion models that include an additional $d$-th category of a masked token $\mathbf{m}$ to the vocabulary $\gV$ which serves as an absorbing state for the diffusion process. A discrete token is represented by the one-hot vector $e^i \in \Delta^d$ in the $d$-dimensional probability simplex and corresponds to placing a $1$ on the $i$-th index and $0$ on all the other $d-1$ indices. In this paper, by convention, we set $e^{\mathbf m} = e^d$ as the one hot vector associated with the masked state $\mathbf{m}$. A categorical distribution over $d$-categories, $\text{Cat}(x;p)$, is constructed by placing a Dirac $\delta$ with weight $p^i$, with the constraint $\sum_i p^i =1$ and the density of a discrete sample is written as $p(X=x) = \sum_{i=0}^d p^i \delta(x - e^i)$, where $X$ is the discrete random variable.

\cut{
\looseness=-1
\xhdr{Notation and convention}
A discrete data sample $x$ is identified by its realization over a vocabulary set $\gV = \{1, \dots, d-1 \}$, over $d-1$ possible categories. Of particular interest is the setting of masked diffusion models that include an additional $d$-th category of a masked token $\mathbf{m}$ to the vocabulary $\gV$ which serves as an absorbing state for the diffusion process. Let the $d$-dimensional probability simplex be defined as $\Delta^d = \{x \in \R^{d} | \mathbf{1}^\top x = 1, x \geq 0\}$, where a vertex $i \in [d]$ is denoted by the one-hot vector $e^i \in \Delta^d$ that places a $1$ on the $i$-th index and $0$ on all the other $d-1$ indices. In this paper, by convention, we set $e^{\mathbf m} = e^d$ as the one hot vector associated with the masked state $\mathbf{m}$. The space of discrete distributions is denoted by $\gP(\Delta^d)$ and every point on the probability simplex $p \in \Delta^d$ corresponds to a \emph{categorical distribution}, $\text{Cat}(x;p)$. More precisely, we can represent categorical distributions, $p(x)$, over $d$ categories in $\Delta^d$ by placing a Dirac $\delta$ with weight $p^i$ on each vertex $e^i$ with the constraint $\sum_i p^i =1$. Since discrete data lies on a vertex of $\Delta^d$ a sample from the data distribution can be written as $p(X=x) = \sum_{i=0}^d p^i \delta(x - e^i)$, where $X$ is the discrete random variable.
}

\looseness=-1
A sequence $\mathbf{x} = (x^1, \dots, x^n)$ of $n$ tokens is defined over the product space $\gV^n = \{1, \dots, \mathbf{m}\}^n$, and its corresponding probability mass function is given by $p(X=\mathbf{x}) = \prod_i^n \sum_{j=0}^d p^j \delta(x^i - e^j)$. To reduce notational clutter, we make use of the shorthand $\delta(y)$ to denote a Dirac measure on a discrete sample $y$ and interchangeably write $p(X=\mathbf{x})= p(\mathbf{x})$ to denote the probability mass function. A dataset of sequences is designated as samples from the data distribution $\pdata$ to be learned by a discrete diffusion model $q_{\theta}$, with parameters $\theta$. 
Discrete diffusion models, like their continuous counterparts, are stochastic processes that evolve with time $t \in [0,1]$ such that $t=0$ corresponds to $\pdata:= p_0$ and $t=1$ corresponds to the terminal distribution, $p_1$ of the process. 
As a discretization of time, we divide $[0,1]$ into $T$ intervals, and let $t(i) = i/T$. For brevity, we drop $i$ and simply write $t$ to denote the corresponding discrete timestep $t(i)$. The notation $0:t$ designates a collection of objects, e.g. densities $p(\bx_{0:t})$, starting from time $t$ to and including time $t=0$.
A trajectory of sequences is denoted as $\tau(\bx_{0:1}) = \bx_1 \to \dots \to \bx_t \to \bx_{t-1} \to \dots \to \bx_0$. Finally, we use subscripts to denote the time index---i.e. $p_t$---and reserve superscripts to designate indices over a set such as a specific sample $\bx^i$ among a collection of samples or dimensions within a vector, e.g. dimension $x^i$ in a sequence.

\looseness=-1
\xhdr{Problem Setting}
We are interested in the task of \emph{probabilistic inference} of sampling from an unnormalized target distribution $\pi_0(\bx_0)$ defined over a discrete space consisting of $n$ tokens $\bx \in \gX^n$,
\begin{equation}
    \pi_0(\bx_0) = \frac{p_0^{\text{pre}}(\bx_0) R(\bx_0)}{\gZ_{\pi_0}}, \quad R(\bx_0) = \frac{\exp(-\gE(\bx_0))}{\gZ_R}.
    \label{eqn:problem_defn}
\end{equation}
\looseness=-1
A key aspect of the considered setting is that $\pi_0(\bx_0)$ is defined as the product distribution of a pre-trained masked discrete diffusion model $ p_0^{\text{pre}}(\bx_0)$ and a distribution induced by a (potentially differentiable) reward model $R: \gX^n \to \mathbb{R}$.
The problem definition in \eqref{eqn:problem_defn} is an instance of Bayesian posterior sampling where the pre-trained MDM is the prior and reward acts as the likelihood or observation model which modulates samples with a high score.
For instance, in scientific domains, the reward model can be provided as a Boltzmann distribution with a known energy function $\gE(\bx_0)$, or a human preference model as in RLHF~\citep{ouyang2022training,rafailov2024direct}.
Importantly, this setting does not afford us \emph{any} ground truth samples from $\pi_0(\bx_0)$ in the form of a dataset which prevents classically training another generative model. 
Instead, we are able to evaluate the reward model---and in special cases its gradient $\nabla R$---but not the normalizing constant, i.e. the partition function $\gZ_{\pi_0}$. 
Samples from the posterior $\pi_0$ thus lie in the intersection of the modes of both the pretrained MDM and the reward model. As a result, learning an amortized sampler, $q_{\theta}(\bx_0)$, for $\pi(\bx_0)$ is rationally equivalent to finetuning the pretrained MDM $p_0^{\text{pre}}(\bx_0)$ using the reward $R(\bx_0)$ in an analogous manner to RLHF~\citep{uehara2024understanding} and is the main focus and contribution of this paper and outlined in~\S\ref{sec:finetuning}.
\subsection{Simplified Masked discrete diffusion}
\label{sec:masked_discrete_diffusion}
\looseness=-1
We are interested in developing a discrete diffusion model directly on discrete data---i.e. without embeddings or continuous reparameterizations---whose approach mirrors the construction of diffusion models for continuous spaces. Consequently, we require the specification of a forward process that converts discrete data $\bx_0 \sim p_0$ at time $t=0$ to an unstructured prior, $p_1$ at the terminal time $t=1$. The specification of a forward process via the transition kernel $p_t(\bx_t| \bx_0)$ implies a unique time reversal of this forward process, termed the ``reverse process'', such that simulating from this reverse process results in samples from the desired target data distribution $p_0(\bx_0)$.

\looseness=-1
We restrict our attention to the recent performant ``simplified masked'' forward process \citep{sahoo2024simple,shi2024simplified,gat2024discrete,zhao2024improving} which hits a terminal distribution of all mask tokens in a sequence $p_1 = [\delta(\mathbf{m})]^n$.
Given a non-masked token in a sequence, $x_0^{i} \in \bx$ the simplified masked forward process 
increases the likelihood of transition to the mask state as time increases. Moreover, the masked forward process is simplified by design since the transition probabilities of a token 
unmasking ($x_{t+1}^i \neq \mathbf{m}$ when $x_t^i = \mathbf{m}$)
is set to zero---i.e. the token remains a masked token for the remainder of the trajectory. The design of the simplified forward process is also independent across each dimension of the sequence, conditioned on $\bx_0$, which allows us to model the transitions of each discrete token in a sequence separately. In totality, the forward process for a sequence $\bx_0$ can be summarized using the following expression for the transition kernel $p_t(x^i_t| x^i_0)$:
\begin{equation}
    \label{eqn:forward_transition_kernel}
    p_t (\bx_t | \bx_0) = \prod_{i=1}^n p_t(x_t^{i}|x_0^{i}) = \prod_{i=1}^n\text{Cat}(x^i_t; \alpha_t \delta(x^i_0) + (1 - \alpha_t) \delta(\mathbf m)),
\end{equation}
\looseness=-1
where $\alpha_t$ is an invertible reparameterization of time such that $\alpha_0 = 1$ and $\alpha_1 = 0$. Effectively, $\alpha_t$ corresponds to the noise schedule which corrupts the discrete data to $p_1$. The corresponding marginal density induced by the forward process at time $t$ can written as $p_t(\bx_t) =  \sum_{\bx_0} p_t(\bx_t| \bx_0) p_0(\bx_0)$.

The reverse process which denoises a sample from $t\to t-1$, and is the time reversal of the simplified masked forward process, also factorizes over each dimension of a sequence $\bx$. The probability $p_t(x^i_{t-1} | x^i_t, x^i_0)$ of a reverse transition is given by the following posterior conditioned on $x^i_0$, 
\begin{equation}
\label{eqn:posterior_mdlm}
  p_t(x^i_{t-1} | x^i_t, x^i_0) =  
    \begin{cases}   
        \text{Cat}(x^i_{t-1}; \delta (x^i_t)) &  x^i_t \neq \mathbf m \\
        \text{Cat}\left (x^i_{t-1}; \frac{(1 - \alpha_{t-1}) \delta(\mathbf{m}) + (\alpha_{t-1} - \alpha_t) \delta(x^i_0)}{1-\alpha_t} \right) & x^i_t = \mathbf{m}. 
    \end{cases}    
\end{equation}
\looseness=-1
Under the reverse process once a token transitions out of the masked state for a time $t>0$ it remains in this state for the remainder of the trajectory. The analytical form of the posterior suggests a natural mean parametrization for a denoiser in a discrete diffusion model, $\mu_{\theta}: \gX \times \mathbb{R} \to \Delta^d$, which predicts the clean sample at $t=0$ by denoising a noisy $x^i_t$,
\begin{equation}
\label{eqn:parametrized_posterior_mdlm}
  q_{t, \theta}(x^i_{t-1} | x^i_t, \mu_{\theta}(x^i_t, t)) =  
    \begin{cases}   
        \text{Cat}(x^i_{t-1}; \delta (x^i_t)) &  x^i_t \neq \mathbf m \\
        \text{Cat}\left (x^i_{t-1}; \frac{(1 - \alpha_{t-1}) \delta(\mathbf{m}) + (\alpha_{t-1} - \alpha_t) \mu_{\theta}(x^i_t, t)}{1-\alpha_t} \right) & x^i_t = \mathbf{m}. 
    \end{cases}    
\end{equation}
\looseness=-1
Interestingly, the mean parametrization $\mu_{\theta}$ used in the posterior is equivalent to predicting the concrete score~\citep{meng2022concrete} which is the discrete equivalent of the Stein score found in conventional continuous diffusion models~\citep{zheng2024masked}. As the number of steps $t\to \infty$, training yields a valid \emph{evidence lower bound} (ELBO) to the marginal log-likelihood of the data distribution $\log p(\mathbf{x}_0)$,
\begin{equation}
    \log p(\bx_0) \geq -\int^1_0 \frac{d\alpha_t}{dt}\cdot \frac{1}{1 - \alpha_t}\mathbb{E}_{\bx_t \sim p_{t}(\bx_t | \bx_0)} \left[ \sum_{i=1}^n (x_0^i)^T\log \mu_{\theta}(x^i_t, t) \right] dt.
\end{equation}
\looseness=-1
Thus, when given access to samples $\bx_0 \sim p_0$ training an MDM can be seen as optimizing a weighted cross-entropy loss and is analogous to fitting a (mean-field) variational posterior distribution $q_{t, \theta}(\bx_0 | \bx_t) = \text{Cat}(\bx_0;\mu_{\theta}(\bx_t, t))$ that matches the first moments of $p_t(\bx_0 |\bx_t)$ and also minimizes the forward KL divergence $\KL( p_t(\bx_0 |\bx_t) p_t(\bx_t)||q_{t, \theta}(\bx_0 | \bx_t) p_t(\bx_t))$~\citep{eijkelboom2024variational}.

\section{Posterior Sampling via \name}
\label{sec:newfinetuning}
\looseness=-1
Given access to a pretrained masked discrete diffusion model $p_0^{\text{pre}}(\bx_0)$ we wish to sample from the reward-induced Bayesian posterior distribution $\pi_0(\bx_0)$ $\propto p_0^{\text{pre}}(\bx_0) R(\bx_0)$. We solve this sampling problem by first defining a time-dependent forward masking process that progressively adds noise to $\pi_0$ yielding the noisy reward-induced posterior $\pi_t(\bx_t) = \sum_{\bx_0} \pi_t(\bx_t | \bx_0) \pi_0(\bx_0)$, where we set $\pi_t(\bx_t | \bx_0) = p_t(\bx_t | \bx_0)$ as it is the same masking process for the pre-trained MDM. 
Unfortunately, since $p_0^{\text{pre}}(\bx_0)$ is an MDM it does not easily provide an exact likelihood. Undeterred we seek to approximate the reverse process $\pi_t(\bx_{t-1} | \bx_{t})$ tied to the masking forward process by using another parametrized model $q_{t, \theta}(\bx_0 | \bx_t) = \text{Cat}(\bx_0; \mu_{\theta}(\bx_t, t))$ which we take to be another MDM.

\looseness=-1
\xhdr{Matching sub-trajectories}
To approximate the reverse process using an MDM we require matching the denoising trajectory $\tau(\bx_{0:t})$ of the reward-induced posterior $\pi_t(\bx_0, \dots, \bx_{t-1} | \bx_t)$ across all masking levels. Assisted in this endeavor, we recall the fact that since $p_0^{\text{pre}}(\bx_0)$ is also an MDM, we have direct access to the pre-trained model's denoiser. Thus, we can compute any transition density starting from $p^{\text{pre}}_t(\bx_{t-1} | \bx_t, \mu^{\text{pre}}(\bx_t, t))$ to the posterior over the endpoint $p^{\text{pre}}_t(\bx_0 | \bx_t)$, conditioned on a partially masked sample $\bx_t$. We form the sub-trajectory matching problem as an instantiation of a detailed balance constraint starting from a partially masked sequence $\bx_t$ of a clean starting point $\bx_0$:
\begin{equation}
    q_{\theta}(\bx_0, \dots, \bx_{t-1}|\bx_t, \hat{\bx}_0) p_t(\bx_t)  =  \pi_{t}(\bx_0, \dots, \bx_{t-1} | \bx_t) p_t(\bx_t).
\end{equation}
\looseness=-1
Setting $\hat{\bx}_0 = \mu_{\theta}(\bx_t, t)$ as the MDM's denoised sample, then $\pi_{t}(\bx_0, \dots, \bx_{t-1} | \bx_t)$ is defined as,
\begin{align*}
     \pi_{t}(\bx_0, \dots, \bx_{t-1} | \bx_t) & = \frac{p_t^{\text{pre}}(\bx_0, \dots, \bx_{t-1}| \bx_t) R(\bx_0)}{\gZ_{\pi_t}(\bx_t)} 
      = \frac{\prod^{t}_{j=1}p_t^{\text{pre}}(\bx_{j-1} 
 | \bx_j, \hat{\bx}^{\text{pre}}_0) R(\bx_0)}{\gZ_{\pi_t}(\bx_t)}.
\end{align*}
\looseness=-1
The detailed balance constraint over sub-trajectories suggests a natural discrete denoising posterior predictive (\nameshort) objective that minimizes the mean squared error of a log-ratio between the denoising sub-trajectories of the amortized MDM sampler and the reward-induced target posterior,
\begin{equation}
      \gL^{\text{PP}}_{\tau} = \mathbb{E}_{t, \bx_t} \Big[ \mathbb{E}_{\tau(\bx_{0:t})}[ \|\log q_{\theta}(\bx_{0:t-1}|\bx_t, \hat{\bx}_0 )) - \log p_t^{\text{pre}}(\bx_{0:t-1}| \bx_t) + \kappa \|^2_2 ] \Big], 
    \label{eqn:subtrajectory_loss_full}
\end{equation}
\looseness=-1
where reward and the log partition function are captured in the constant $\kappa = \log \gZ_{\pi_t}(\bx_t) - \log R(\bx_0)$.
Interestingly, we can form an equivalent expression for the sub-trajectory loss $\gL^{\text{PP}}_{\tau}$ above by sampling two intermediate points $\bx_s, \bx_{s -\gamma}$ in the sub-trajectory $\tau(\bx_{0:t})$, such that $0< s- \gamma < s < t$:
\begin{equation}
   \gL^{\text{PP}}_{\tau} = \mathbb{E}_{t, \bx_t, \tau(\bx_{0:t})}\left[\left\| t \mathbb{E}_{s, \bx_s, \bx_{s-\gamma}} \left[\log q_{\theta}(\bx_{s-\gamma}|\bx_s,   \hat{\bx}_0) - \log p_t^{\text{pre}}(\bx_{s-\gamma},| \bx_s,\hat{\bx}^{\text{pre}}_0) + \kappa \right] \right\|^2_2 \right].
   \label{eqn:subtrajectory_loss_reformulated}
\end{equation}
\looseness=-1
The proof for this equivalence is presented in~\S\ref{app:equivalence_of_objectives}.
Note that we sample $s, s-\gamma \sim \gU[0, t], \gU[0,s]$ uniformly, and when $\gamma=1/T$ we sample $\bx_{s-1}$ which is simple to do since the $\tau(\bx_{0:t})$ already contains this information. Crucially, unlike~\eqref{eqn:subtrajectory_loss_full} the reformulation of the sub-trajectory loss in~\eqref{eqn:subtrajectory_loss_reformulated} is effectively a simulation-free version of Relative Trajectory Balance (RTB)~\citep{venkatraman2024amortizing}. If the approximation $q_{t, \theta}$ matches the denoising reward-induced target posterior over all sub-trajectories then the reverse process of $q_{t,\theta}$ can be simulated to draw samples that follow $\pi_0(\bx_0)$. 
Consequently, we term the $q_{t, \theta}$ that minimizes the \nameshort objective in~\eqref{eqn:subtrajectory_loss_reformulated} as the \emph{finetuned MDM} which solves the probabilistic inference task of sampling from $\pi_0(\bx_0)$.

\looseness=-1
In contrast to learning MDMs in typical generative modeling setups, the \nameshort objective requires the computation of the intractable log partition function $\log \gZ_{\pi_t}(\bx_t)$ evaluated at $\bx_t$ which is a component of the term $\kappa$.
This observation motivates the design of three concrete learning objectives for posterior matching, which as a collection we term the \name framework. Specifically, finetuning $q_{t, \theta}$ under a \nameshort framework can be done in the following algorithms: 1.) \nameshort-IS which uses a Monte Carlo based importance sampling estimate to approximate $\log \gZ_{\pi_t}$ in~\eqref{eqn:subtrajectory_loss_reformulated}, 2.) \nameshort-LB that constructs a lower bound to \nameshort-IS that is cheaper to evaluate by parameterizing $\log \gZ_{\pi_t}$, and 3.) \nameshort-KL which uses a discrete gradient estimator to bypass computing $\log \gZ_{\pi_t}$ at the cost of requiring a differentiable reward---i.e. $\nabla R$.

\subsection{Estimating the log partition function}
\looseness=-1
Inspecting the posterior predictive objective in \eqref{eqn:subtrajectory_loss_reformulated} we remark that it is a simulation-free stochastic regression objective which does not require a differentiable reward as the loss computes $R(\bx_0)$ and not a gradient of the reward. Consequently, this makes the posterior predictive objective both a scalable and efficient objective for fine-tuning large pre-trained MDMs as long the reward model is easy to evaluate. Moreover, the posterior predictive objective is also an \emph{off-policy} objective as it can be evaluated using any partially masked samples $\bx_t \sim p(\bx_t | \bx_0)$. Practically, this means that fine-tuning can be performed using a replay buffer of samples from a biased dataset, e.g. the original training set for $p_0^{\text{pre}}$, or even partially masked sequences that arrive from a different model altogether. Despite its simple form the posterior predictive objective
requires the computation of the log partition function of a partially masked sequence $\log \gZ_{\pi_t}$ which does not have a closed-form expression and must be estimated.

\label{sec:logz_estimation}
\looseness=-1
\xhdr{Monte Carlo Estimate of $\log \gZ_{\pi_t}$ with \nameshort-IS}
A numerical estimate of the log normalization constant can be obtained by utilizing the trick of using the pre-trained model's denoising posterior $p^{\text{pre}}(\bx_0| \bx_t)$. Specifically, given $\bx_t 
\sim p_t(\bx_t)$ we obtain a Monte Carlo estimate of $\log \gZ_{\pi_t}(\bx_t)$ that uses $M$ additional samples from $\bx_0 \sim p_t^{\text{pre}}(\bx_0 | \bx_t)$ to estimate the log partition function,
\begin{align*}
    \log \hat{\gZ}_{\pi_t}(\bx_t) &= \log \left( \sum_{\bx_{0}, \dots \bx_{t-1}} p^{\text{pre}}_t (\bx_0, \dots, \bx_{t-1}| \bx_t)R(\bx_0) \right) \approx \log \left(\mathbb{E}_{\bx'_0 \sim p^{\text{pre}}_t (\bx_0 \mid \bx_t)}[R(\bx'_0)] \right).
\end{align*}
\looseness=-1
Where in the second equality in the first line we used the fact that we can approximately jump to the endpoint of the reverse process directly by using the pretrained model's denoiser to sample $\bx_0$. Conveniently, this MC estimate solely requires obtaining a denoised sample from the pre-trained MDM which can be efficiently done as each sample requires a single step as due to the denoising posterior parametrization of an MDM (\eqref{eqn:parametrized_posterior_mdlm}). We can further improve the estimation of this log normalization constant by leveraging importance sampling (IS) with a proposal distribution  $w(\bx_0)$:
\begin{equation*}
    \log \hat{\gZ}^{\text{IS}}_{\pi_t}(\bx_t) = \log \left(\mathbb{E}_{\bx_0' \sim w(\bx_0)} \left[\frac{ p^{\text{pre}}_{t}(\bx_0 | \bx_t) R(\bx_0')}{w(\bx_0')} \right] \right) = \log \left(\frac{1}{M}\sum_{j=1}^M\left[ \frac{p^{\text{pre}}_t(\bx_0 | \bx_t) R(\bx^j_0)}{w(\bx_0^j)}\right] \right). 
\end{equation*}
\looseness=-1
For the IS estimator above it is easy to verify that the optimal proposal distribution for variance reduction is proportional to the denoising reward-induced target posterior $w^*(\bx_0) \propto \pi_t(\bx_0 | \bx_t)$. Fortunately, this is precisely the distribution that is approximated by $q_{t,\theta}$ using the posterior predictive objective which motivates the reuse of the finetuned model as a suitable proposal, i.e.  $w(\bx_0) = q_{t,\theta}(\bx_0 | \bx_t)$.

\looseness=-1
\xhdr{Learning $\log \gZ_{\pi_t}$ with \nameshort-LB}
An alternative to using an MC-based estimate for $\log \gZ_{\pi_t}$ is to parameterize the log partition function itself $\log \hat{\gZ}^{\text{LB}}_{\pi_t, \theta}$ jointly with the $q_{t,\theta}$ and optimize both using the same posterior predictive objective as first defined in \eqref{eqn:finetune_loss}. Operationally, this amounts to including another prediction head for the finetuned MDM model and is cheaper to compute than using an MC-based estimate as we do not require $M$ evaluations of the pre-trained model as in $\log \hat{\gZ}^{\text{IS}}_{\pi_t}(\bx_t)$.

\looseness=-1
At first glance, it remains unclear whether a parameterized $\log \hat{\gZ}^{\text{LB}}_{\pi_t, \theta}$ is a sensible strategy. However, in the particular case where we choose the proposal distribution to be on-policy by using finetuned MDM $w(\bx_0) = q_{t,\theta}(\bx_0 | \bx_t)$, we can show that the learned log partition function estimate is a lower bound to the importance sampling estimate. This is formalized in the following proposition below.

\begin{restatable}{proposition}{lowerboundlogZ}
\label{prop:lowerboundlogZ}
\looseness=-1
   Let $\log \hat{\gZ}^{\text{IS}}_{\pi_t}$ and $\log \hat{\gZ}^{\text{LB}}_{\pi_t, \theta}$ be the $M$-sample importance sampling estimate using the proposal $q_{t, \theta}(\bx_0 | \bx_t)$ and learned approximation to the log partition function respectively. Given a partially masked sample $\bx_t \sim p_t(\bx_t)$ the optimal learned approximation is a lower bound to the importance sampling estimate with a fixed proposal $q_{t, \theta}(\bx_0 | \bx_t)$ and the following inequality holds:
    \begin{equation}
        \log \hat{\gZ}^{\text{LB}}_{\pi_t, \theta} (\bx_t) \leq \log \hat{\gZ}^{\text{IS}}_{\pi_t}(\bx_t).
    \end{equation}
\end{restatable}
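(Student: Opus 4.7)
The plan is to characterize the optimal learned value $\log \hat{\gZ}^{\text{LB}}_{\pi_t,\theta}(\bx_t)$ at a fixed partially masked sample $\bx_t$, recognize it as a variational lower bound, and then conclude by a single application of Jensen's inequality to the IS estimator.

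First, I would fix $\bx_t$ and isolate the scalar learned quantity $Z := \log \hat{\gZ}^{\text{LB}}_{\pi_t,\theta}(\bx_t)$ inside the posterior predictive objective in~\eqref{eqn:subtrajectory_loss_reformulated}. With the on-policy choice $w(\bx_0) = q_{t,\theta}(\bx_0 \mid \bx_t)$, the loss evaluated at $\bx_t$ collapses, by the single-step denoising parametrization~\eqref{eqn:parametrized_posterior_mdlm}, to a univariate quadratic in $Z$ of the form
\begin{equation*}
\mathbb{E}_{\bx_0 \sim q_{t,\theta}(\cdot \mid \bx_t)}\Big[\big(\log q_{t,\theta}(\bx_0 \mid \bx_t) - \log p^{\text{pre}}_t(\bx_0 \mid \bx_t) - \log R(\bx_0) + Z\big)^2\Big].
\end{equation*}
Setting its derivative in $Z$ to zero yields the closed-form optimum
\begin{equation*}
Z^{\star} \;=\; \mathbb{E}_{\bx_0 \sim q_{t,\theta}(\cdot \mid \bx_t)}\!\left[\log \frac{p^{\text{pre}}_t(\bx_0 \mid \bx_t)\, R(\bx_0)}{q_{t,\theta}(\bx_0 \mid \bx_t)}\right],
\end{equation*}
which is exactly the ELBO-style lower bound on the true log normalizer with proposal $q_{t,\theta}$.

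Second, I would observe that the idealized (expectation-form) importance sampling estimate with the same on-policy proposal is
\begin{equation*}
\log \hat{\gZ}^{\text{IS}}_{\pi_t}(\bx_t) \;=\; \log \mathbb{E}_{\bx_0 \sim q_{t,\theta}(\cdot\mid \bx_t)}\!\left[\frac{p^{\text{pre}}_t(\bx_0 \mid \bx_t)\, R(\bx_0)}{q_{t,\theta}(\bx_0 \mid \bx_t)}\right].
\end{equation*}
Applying Jensen's inequality to the concave $\log$, i.e. $\log \mathbb{E}[Y] \geq \mathbb{E}[\log Y]$ with $Y = p^{\text{pre}}_t(\bx_0\mid \bx_t) R(\bx_0)/q_{t,\theta}(\bx_0\mid \bx_t)$, immediately gives $\log \hat{\gZ}^{\text{IS}}_{\pi_t}(\bx_t) \geq Z^{\star} = \log \hat{\gZ}^{\text{LB}}_{\pi_t,\theta}(\bx_t)$, which is the claim.

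For the finite-$M$ version, I would separately note the standard IWAE-type monotonicity: the ELBO coincides with $\mathbb{E}[\log \hat{\gZ}^{\text{IS}}_{M=1}]$ and the sequence $M \mapsto \mathbb{E}[\log \hat{\gZ}^{\text{IS}}_M]$ increases in $M$ toward $\log \mathbb{E}[Y]$, again by Jensen's applied to averages of i.i.d.\ weights. The only subtle step is the first one: verifying that when $\log \hat{\gZ}^{\text{LB}}$ is treated as a scalar offset in the regression, its optimum is the stated variational bound; this requires using the on-policy expectation (the expectation under which the outer loss is evaluated is $q_{t,\theta}(\bx_0 \mid \bx_t)$), so the main obstacle is making precise which sampling distribution governs the outer expectation of the posterior predictive loss in~\eqref{eqn:subtrajectory_loss_reformulated}. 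Once that is nailed down, the remainder of the argument is a one-line Jensen.
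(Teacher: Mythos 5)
Your proposal is correct and follows essentially the same route as the paper: the paper's Lemma likewise identifies the optimal scalar by minimizing the quadratic regression loss under the on-policy proposal, obtaining the ELBO-form expectation of the log importance ratio, and then applies Jensen's inequality $\mathbb{E}[\log Y] \leq \log \mathbb{E}[Y]$ to conclude, with equality at the optimal proposal. Your additional remark on the finite-$M$ IWAE-style monotonicity is a sensible clarification the paper leaves implicit, but it does not change the argument.
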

The proof for \eqref{prop:lowerboundlogZ} is provided in~\S\ref{app:proof_lowerbound}. We highlight that the lower bound becomes equality at the optimal proposal $q_{t, \theta}(\bx_0 | \bx_t) \propto \pi_t(\bx_0 | \bx_t)$. Learning $\log \hat{\gZ}^{\text{LB}}_{\pi_t, \theta}$ has the benefit of amortization as the same network can be reused for all partially masked samples $\bx_t \sim p_t(\bx_t)$, across all levels of masking. In addition, over the course of training, the learned estimate $\log \hat{\gZ}^{\text{LB}}_{\pi_t, \theta}$ becomes a better estimate for the true log partition function. In practice, it suffices to take a single gradient step to optimize $\log \hat{\gZ}^{\text{LB}}_{\pi_t, \theta}$ rather than optimizing till convergence. As a result, no additional overhead needs to be incurred, and the learned estimate is averaged over a batch of noisy samples $\gB = \{\bx_t^i\}^N_{i=1}$.

\begin{algorithm}[t]
\caption{Single-step \nameshort-IS and \nameshort-LB}
\label{alg:ft_post_pred}
\small
\textbf{Input}: Reward $R(\bx_0)$, base MDM $p_0^\text{pre}(\bx_0 | \bx_t)$, sampling policy $r(\bx_0)$, fine-tuning MDM $q_\theta(\bx_0 | \bx_t)$
\begin{algorithmic}[1] 
\While{Training}
\State{$t, \bx_0\sim \gU[0,1], r(\bx_0)$} \Comment{Sample time and clean data on or off-policy }
\State{$\bx_t \sim p_t(\bx_t | \bx_0)$ }\Comment{Construct partially masked sample given clean data}
\If{Importance Sample $\log \gZ(\bx_t)$} \Comment{Log Partition Function Estimation Strategy}
    \State{$\log \hat{\gZ}_{\pi_t} (\bx_t) := \log \hat{\gZ}^{\text{IS}}_{\pi_t}(\bx_t) = \log \left(\frac{1}{M}\sum_{j=1}^M\left[\frac{ p^{\text{pre}}_{t}(\bx^j_0 | \bx_t) R(\bx^j_0)}{w(\bx^j_0)} \right] \right)$}
\Else
    \State{$\log \hat{\gZ}_{\pi_t} (\bx_t) := \log \hat{\gZ}^{\text{LB}}_{\pi_t, \theta} (\bx_t)$}
\EndIf
\State{$\mathcal{L}^{\text{PP}} = \left|\left| \log q_{t, \theta} (\bx_0 | \bx_t) - \log p^{\text{pre}}_t(\bx_0 | \bx_t) -\log R(\bx_0) + \log \hat{\gZ}_{\pi_t} (\bx_t) \right| \right|^2_2 $ }
\State $\theta \leftarrow \text{Update}(\theta, \nabla_\theta \gL^{\text{PP}})$
\EndWhile
\State{\textbf{Return} $q_{\theta}$} 
\end{algorithmic}
\end{algorithm}

\subsection{Single-step posterior sampling with endpoint prediction}
\label{sec:finetuning}
\looseness=-1

The sub-trajectory matching objective used by \nameshort-IS and \nameshort-LB can be simplified to a faster single-step objective at the cost of paying a discretization error by not using finer-grained trajectory information. Specifically, we note that for MDMs the denoising posterior over endpoints $q_{t, \theta}(\bx_0 | \bx_t) \approx \text{Cat}(\bx_0; \mu_{\theta}(\bx_t, t))$ can be approximately computed \emph{without unrolling the sub-trajectory}. This fact also holds for the pre-trained MDM as the model parametrization implies $p^{\text{pre}}_t(\bx_0 | \bx_t) \approx \text{Cat}(\bx_0; \mu(\bx_t, t))$. For the single-step objective we assume the parameterized denoisers exactly match the posteriors. Leveraging this enables us to express the denoising reward-induced target posterior using a simple expression that directly uses the pre-trained model's denoising posterior $p^{\text{pre}}_t(\bx_0 | \bx_t)$ as follows:
\begin{align}
    \pi_{t}(\bx_0 | \bx_t) &= \frac{p_t(\bx_t)}{p_t(\bx_t)} \cdot \frac{p_t(\bx_t | \bx_0)p^{\text{pre}}(\bx_0) R(\bx_0) }{\sum_{\bx_0'}p_t(\bx_t | \bx_0')  p^{\text{pre}}(\bx_0') R(\bx_0') }
    = \frac{p^{\text{pre}}_t(\bx_0 | \bx_t) R(\bx_0)}{\gZ_{\pi_t}(\bx_t)}.
    \label{eqn:fine_tune_posterior}
\end{align}
\looseness=-1
The choice of parameterizing $q_{t, \theta}(\bx_0 | \bx_t)$ as another MDM offers a prescriptive strategy for sampling from the desired target $\pi_0$ by learning to match the denoising reward-induced posterior at the predicted endpoint $\pi_t(\bx_0 | \bx_t)$.
This simplifies the expression of \nameshort defined over trajectories in~\eqref{eqn:subtrajectory_loss_reformulated} to a single point, namely the predicted endpoint $\bx_0$ of each MDM. This objective is presented below:
\begin{align}
    \mathcal{L}^{\text{PP}} &=  \mathbb{E}_{t, \bx_0, \bx_t} \left[ \left|\left| \log q_{t,\theta} (\bx_0 | \bx_t) - \underbrace{\log p^{\text{pre}}_t(\bx_0 | \bx_t) -\log R(\bx_0) + \log \gZ_{\pi_t}(\bx_t)}_{\log \pi_t(\bx_0 | \bx_t)} \right| \right|^2_2\right].
    \label{eqn:finetune_loss}
\end{align}
\looseness=-1
As done previously, we can employ any estimation strategy to compute the log partition function~\eqref{eqn:finetune_loss}. We note in many cases, such as when the sequence length of the trajectory is small to moderate, the single-step objective may be an attractive alternative to the sub-trajectory variants of \nameshort.
Algorithm~\ref{alg:ft_post_pred} provides a detailed description of the single-step version of DDPP.

\subsection{\nameshort-KL: Posterior prediction via reverse KL minimization}
\label{sec:reverse_KL}
\looseness=-1
The single-step posterior prediction objective as defined using the loss function $\gL^{\text{PP}}$ in \eqref{eqn:finetune_loss} requires the estimation of $\log \gZ^{\text{LB}}_{\pi_t,\theta}$ which introduces a source of variance in loss estimates that may sub-optimally influence learning dynamics of the fine-tuned model. In settings, where the reward model is differentiable we can bypass computing $\log \gZ^{\text{LB}}_{\pi_t,\theta}$ altogether by learning to match the denoising reward-induced posterior under the \emph{reverse} KL divergence. To see this, we define a variational posterior matching problem using the reverse KL divergence that takes the following form:
\begin{align}
    \gL_t^{\text{KL}} := \KL( q_{t, \theta}(\bx_0 | \bx_t) p_t(\bx_t)|| \pi_t(\bx_0 |\bx_t) p_t(\bx_t)).
    \label{eqn:reverse_KL_objective}
\end{align}
\looseness=-1
Unlike conventional generative modeling using the reverse KL divergence which solely matches distributions at $t=0$ the problem definition in \eqref{eqn:reverse_KL_objective} defines a series of reverse KL minimization problems through time. In this manner, the reverse KL matches distributions annealed through time and can be used to derive a stochastic regression objective for fine-tuning,
\begin{align}
    \gL^{\text{KL}} 
    &=  \mathbb{E}_{t, \bx_0, \bx_t} \left[\log q_{t, \theta} (\bx_0 | \bx_t) - \log p^{\text{pre}}_t(\bx_0 | \bx_t) -\log R(\bx_0)\right] + C.
    \label{eqn:reverse_KL_loss_reinmax}
\end{align}
\looseness=-1
The expectation in \eqref{eqn:reverse_KL_loss_reinmax}, like \nameshort-IS and \nameshort-LB is taken uniformly with respect to time $t \sim \gU[0,1]$. However, unlike the previous estimators, clean data needed to compute $\gL^{\text{KL}}$ is drawn purely on-policy by simulating the fine-tuning model $\bx_0 \sim q_{\theta}(\bx_0)$, which then also allows us to craft a noisy sample using the masking forward process $\bx_t \sim p_t(\bx_t | \bx_0)$. Additionally,
in \eqref{eqn:reverse_KL_loss_reinmax} the constant $C = \mathbb{E}_{t, \bx_0, \bx_t}[\log \gZ_{\pi_t}(\bx_t)]$ does not depend on the $\theta$---and as a result is also independent of the sample $\bx_0 \sim q_{t, \theta}(\bx_0)$. This results in the constant $C$ being zero when computing the gradient of the loss  $\nabla_{\theta} \gL^{\text{KL}}$ and as a result we can safely disregard computing $\log \gZ_{\pi_t}$ entirely. 

\looseness=-1
As samples $\bx_0$ are procured on-policy to compute the gradient of the loss $\nabla_{\theta}\gL^{\text{KL}}$ we require backpropagating through the stochastic sampling of $\bx_0$ which comes from simulating the fine-tuning MDM $q_{t, \theta}(\bx_0)$. Fortunately, we can make use of modern discrete gradient estimators which provide a biased but low variance gradient estimate enabling us to compute $\gL^{\text{KL}}$. Specifically, we opt to use the scalable 2nd order $\textsc{Reinmax}$ estimator~\citep{liu2024bridging} which estimates the discrete gradient up to second-order terms in a Taylor approximation of the actual gradient. We note that unlike \nameshort-IS and \nameshort-LB this new loss that minimizes the reverse KL divergence $\gL^{\text{KL}}$ requires the reward model $R$ to be differentiable and as a result is less broadly applicable than computing $\gL^{\text{PP}}$. However, in practice, learning can be faster as we make use of the information afforded to us by the gradient $\nabla R$ as well as the fact that the objective does not need to estimate the log partition function.

\looseness=-1
In appendix~\S\ref{app:Reinmax} we provide the exact algorithm Alg. \ref{alg:ft_reverse_KL} to compute the reverse KL objective. We further show how using a gradient estimator like \textsc{Reinmax} can be used to derive efficient gradient estimation for a more general class of problems of sampling from $ \pi_0 (\bx_0) = R(\bx_0)/\gZ_{\pi_0}$, as well as the main fine-tuning setting for matching the denoising reward-induced posterior as defined in ~\eqref{eqn:fine_tune_posterior}.

\section{Experiments}
\label{sec:experiments}
\begin{wraptable}{r}{0.5\textwidth}
    \vspace{-42pt}
    \centering
    \caption{\looseness=-1 \small Overview of posterior sampling methods}
    \vspace{-5pt}
\resizebox{0.5\textwidth}{!}{
\begin{tabular}{l|ccc}
\toprule
Method & Model calls / inf. step & Model calls / train step & Sim.\ Free \\
\midrule
SVDD    & $N$ & --- & \cmark \\
RTB     & 1  & $T$ & \xmark \\
\nameshort-KL & 1  & 1   & \cmark \\
\nameshort-IS & 1  & $M$ & \cmark \\
\nameshort-LB & 1  & 1   & \cmark \\
\bottomrule
\end{tabular}
}
    \label{tab:conditioning}
    \vspace{-10pt}
\end{wraptable}
\looseness=-1
We investigate the application of \nameshort to a variety of discrete generative modeling settings. We provide the full experimental details in~\S\ref{app:additional_experiment_details} and present our main experimental results next.

\looseness=-1
\xhdr{Baselines}
Throughout our experiments, we rely on four principal baselines (compared in~\autoref{tab:conditioning}) in sampling from the pre-trained MDM model, Best-of-N sampling~\citep{stiennon2020learning,nakano2021webgpt,touvron2023llama}, Relative Trajectory Balance (RTB)~\citep{venkatraman2024amortizing}, and SVDD~\citep{li2024derivative} which is a concurrent inference time technique for steering diffusion models. Best-of-N represents a computationally expensive baseline but is guaranteed to produce samples from $\pi_0$, as such we use this as an upper bound on performance in terms of reward obtained as $N\to \infty$~\citep{beirami2024theoretical,ferbach2024self}. RTB is a GFlowNet~\citep{bengio2023gflownet,madan2022learning,lahlou2023theory} that requires simulating the entire diffusion trajectory. In \autoref{tab:conditioning} we illustrate the computational differences between \nameshort and baselines.

\subsection{Synthetic Experiments}
\label{sec:synthetic_experiments}
\begin{wraptable}{r}{0.5\textwidth}
\centering
\vspace{-35pt}
\caption{\small Fine-tuning to produce only even digits on binarized MNIST. We report the mean performance over $3$ runs for the $\log R$, FLD, and BPD metrics.}
\resizebox{0.5\textwidth}{!}{
\begin{tabular}{lccc}
    \toprule
Algorithm $\downarrow$ Metric $\rightarrow$ &  $\log R(\bx_0) \uparrow$ & FLD $\downarrow$ &  BPD $\downarrow$ \\
\midrule
Base Model & -26.90 $\pm$ ---  & 33.89 $\pm$ --- & 0.130 $\pm$ ---  \\
SVDD & \textbf{-0.03} $\pm$ \textbf{0.01}  & 34.19 $\pm$ 0.95 & ---  \\
\midrule
RTB & -18.66 $\pm$ 2.45 & 45.97 $\pm$ 0.89 & \textbf{0.128} $\pm$ \textbf{0.000}  \\
\nameshort-IS (\textit{ours}) & -5.14 $\pm$ 1.24 & 33.11 $\pm$ 0.71 & 0.130 $\pm$ 0.000 \\
\nameshort-LB (\textit{ours}) & -5.68 $\pm$ 0.34 & 33.76 $\pm$ 0.90 & \textbf{0.128} $\pm$ \textbf{0.000}  \\
\nameshort-KL (\textit{ours}) & -3.13 $\pm$ 0.06 & \textbf{31.75} $\pm$ \textbf{0.51} & 0.129 $\pm$ 0.000  \\
\bottomrule
\end{tabular}
}
\label{tab:mnist_results_main}
\end{wraptable}
\looseness=-1
We consider a synthetic experimental task in learning to sample from a target distribution defined on a $2\text{D}$ discrete grid and finetuning an MDM on binarized MNIST. We use this synthetic setting to test all variations of \nameshort along with our chosen baselines and present visual qualitative results in~\autoref{fig:grid}, 
\autoref{fig:app_mnist_samples} and quantitative results in \autoref{tab:mnist_results_main}. 

\looseness=-1
\xhdr{Grid Experiment} We define a prior density $p_0^{\text{pre}}$ over the discrete 2-dimensional, $128 \times 128$ grid, as showcased in \autoref{fig:grid}(a) where the probability mass corresponding to each point $\bx_0$ is on if the color is yellow. The goal is to sample from the product distribution as outlined in \autoref{eqn:problem_defn}, which in this case is defined to drop the modes in $p_0^{\text{pre}}$ which are at the top half of the grid, as 
visualized in \autoref{fig:grid}(b).
These results show that all three variants of \nameshort effectively learn to sample from this target.

\looseness=-1
\xhdr{MNIST}
We finetune MDMs to generate even MNIST digits. As observed in \autoref{tab:mnist_results_main} we find that all three variants of \nameshort match or outperform the base pre-trained model and RTB in all metrics, with \nameshort-KL being the best. In comparison to the concurrent work of SVDD, we find that it outperforms \nameshort in average $\log R$ but is worse in sample-based metrics such as class conditional FLD~\citep{jiralerspong2023feature} which measures the overall quality, diversity and generalizability of generated samples and class conditional BPD. We further report generated samples in~\autoref{fig:app_mnist_samples} located in~\S\ref{app:discrete_image_modeling}.

\begin{figure}[t]
    \vspace*{-1em}
    \centering
    \begin{minipage}{0.16\textwidth}
        \centering
        \includegraphics[width=\textwidth]{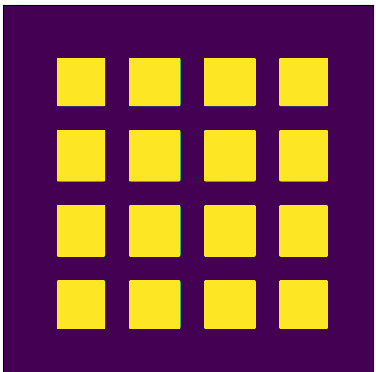} %
        \subcaption{Prior Density}
        \label{fig:grid_prior}
    \end{minipage}\hfill
    \begin{minipage}{0.16\textwidth}
        \centering
        \includegraphics[width=\textwidth]{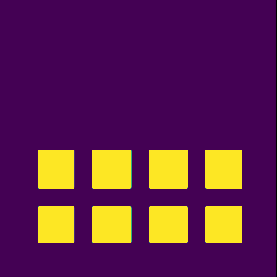} %
        \subcaption{Target Density}
        \label{fig:grid_target}
    \end{minipage}
    \hfill
    \begin{minipage}{0.16\textwidth}
        \centering
        \includegraphics[width=\textwidth]{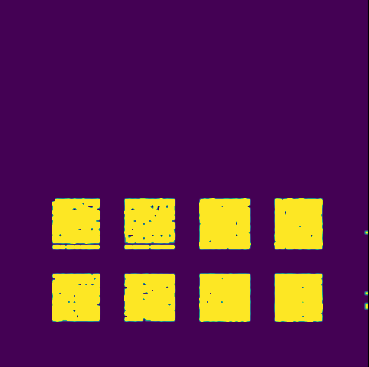} %
        \subcaption{DDPP-IS}
        \label{fig:grid_ddpp_is}
    \end{minipage}\hfill
    \begin{minipage}{0.16\textwidth}
        \centering
        \includegraphics[width=\textwidth]{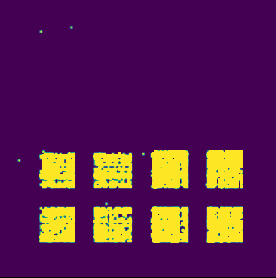} 
        \subcaption{DDPP-LB}
        \label{fig:grid_ddpp_lb}
    \end{minipage}\hfill
    \begin{minipage}{0.16\textwidth}
        \centering
        \includegraphics[width=\textwidth]{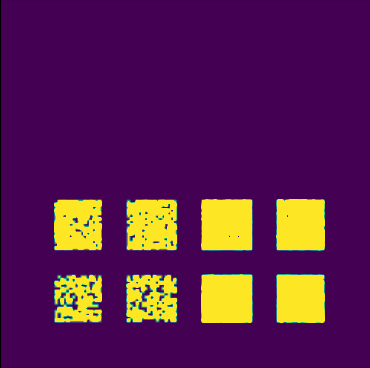} 
        \subcaption{DDPP-KL}
        \label{fig:grid_kl}
    \end{minipage}\hfill
    \vspace{-5pt}
    \caption{\small Samples generated by fine-tuning a masked diffusion model to sample from the lower half of its prior distribution. Samples $\rvx_0$ in this setting are 2-dimensional, with a vocabulary size of $128$.}
    \label{fig:grid}
    \vspace{-1.5em}
\end{figure}

\subsection{Pixel-level image modelling}
\label{sec:image_experiments}
\looseness=-1
We next consider the task of fine-tuning MDMs on order-agnostic image data. More precisely, we discretize pixels in $64 \times 64$ downsampled images from the CelebA dataset~\citep{liu2018large} to a vocabulary of $256$ tokens. As there are no publicly available pre-trained MDM models we train our own MDM by modeling the raw pixel space and achieve $1.85$ bits-per-dim (BPD) on CelebA. Our full experimental setup is outlined in~\S\ref{app:discrete_image_modeling}.
For fine-tuning, we consider steering a pre-trained MDM using \nameshort-LB as it is the most computationally cheap method with a class-conditional reward based on an auxiliary classifier. Specifically, we steer the generative model to generate human faces with blond hair. For quantitative metrics, we report the mean log reward obtained, and BPD in \autoref{fig:celeba} as well as selected generated samples. Our quantitative results show that our proposed variant \nameshort-LB significantly outperforms all other baselines in obtaining the highest reward. We also observe \nameshort obtains BPD values that are within the range of the base model while being worse than RTB. These results are further substantiated by the visual samples where we find that generated samples do produce the highest fidelity faces with blond hair, matching our fine-tuning goal.

\subsection{Protein sequence modelling}

\begin{table}[ht]
\centering
\caption{\small \emph{In-silico} results for protein generation tasks. We report the mean result for a metric with standard deviation across three seeds. DDPP-LB performs well across designability metrics (pLDDT and pTM) while simultaneously performing best on task specific metrics ($\beta$-sheet \% and TM-Score).}
\vspace{-5pt}
\label{tab:protein_experiments}
\resizebox{0.99\textwidth}{!}{
\begin{tabular}{lcccc|ccccc}
\toprule
 & \multicolumn{4}{c}{High $\beta$-sheet-content protein generation} & \multicolumn{5}{c}{Protein shrinking} \\
\cmidrule(lr){2-5} \cmidrule(lr){6-10}
 & $\beta$-sheet \% $\uparrow$ & pLDDT $\uparrow$ & pTM $\uparrow$ & $\log R(x_0) \uparrow$ & SS-KL $\downarrow$ & TM-Score $\uparrow$ & pLDDT $\uparrow$ & pTM $\uparrow$ & $\log R(x_0)$ $\uparrow$ \\
\midrule
Base Model & 0.111 $\pm$ 0.121 & 0.724 $\pm$ 0.144 & 0.584 $\pm$ 0.226 & 2.070 $\pm$ 0.749 & 3.040 $\pm$ 3.043 & 0.245 $\pm$ 0.058 & 0.724 $\pm$ 0.144 & 0.584 $\pm$ 0.226 & 0.490 $\pm$ 0.116 \\
Best-of-10 & 0.280 $\pm$ 0.093 & 0.812 $\pm$ 0.033 & 0.786 $\pm$ 0.035 & 3.212 $\pm$ 0.371 & 1.621 $\pm$ 2.804  & 0.345 $\pm$ 0.049 & 0.786 $\pm$ 0.023 & 0.737 $\pm$ 0.097 & 0.690 $\pm$ 0.098 \\
SVDD & 0.114 $\pm$ 0.148 & 0.484 $\pm$ 0.134 & 0.349 $\pm$ 0.174 & 1.669 $\pm$ 0.907 & 3.353 $\pm$ 2.913  & 0.337 $\pm$ 0.042 & 0.492 $\pm$ 0.131 & 0.368 $\pm$ 0.171 & 0.673 $\pm$ 0.083 \\
\midrule
RTB & 0.319 $\pm$ 0.218 & 0.806 $\pm$ 0.059 & 0.767 $\pm$ 0.101 & 3.386 $\pm$ 1.061 & 2.193 $\pm$ 2.724 & 0.290 $\pm$ 0.056 & \textbf{0.797 $\pm$ 0.056} & 0.747 $\pm$ 0.093 & 0.581 $\pm$ 0.112 \\
DDPP-LB & \textbf{0.436 $\pm$ 0.037} & \textbf{0.897 $\pm$ 0.027} & \textbf{0.806 $\pm$ 0.029} & \textbf{3.703 $\pm$ 0.186} & \textbf{0.640 $\pm$ 1.793} & \textbf{0.361 $\pm$ 0.047} & 0.768 $\pm$ 0.048 & \textbf{0.747 $\pm$ 0.063} & \textbf{0.722 $\pm$ 0.094} \\
\bottomrule
\end{tabular}
}
\end{table}

\begin{figure}[t]
    \centering
    \includegraphics[width=\textwidth]{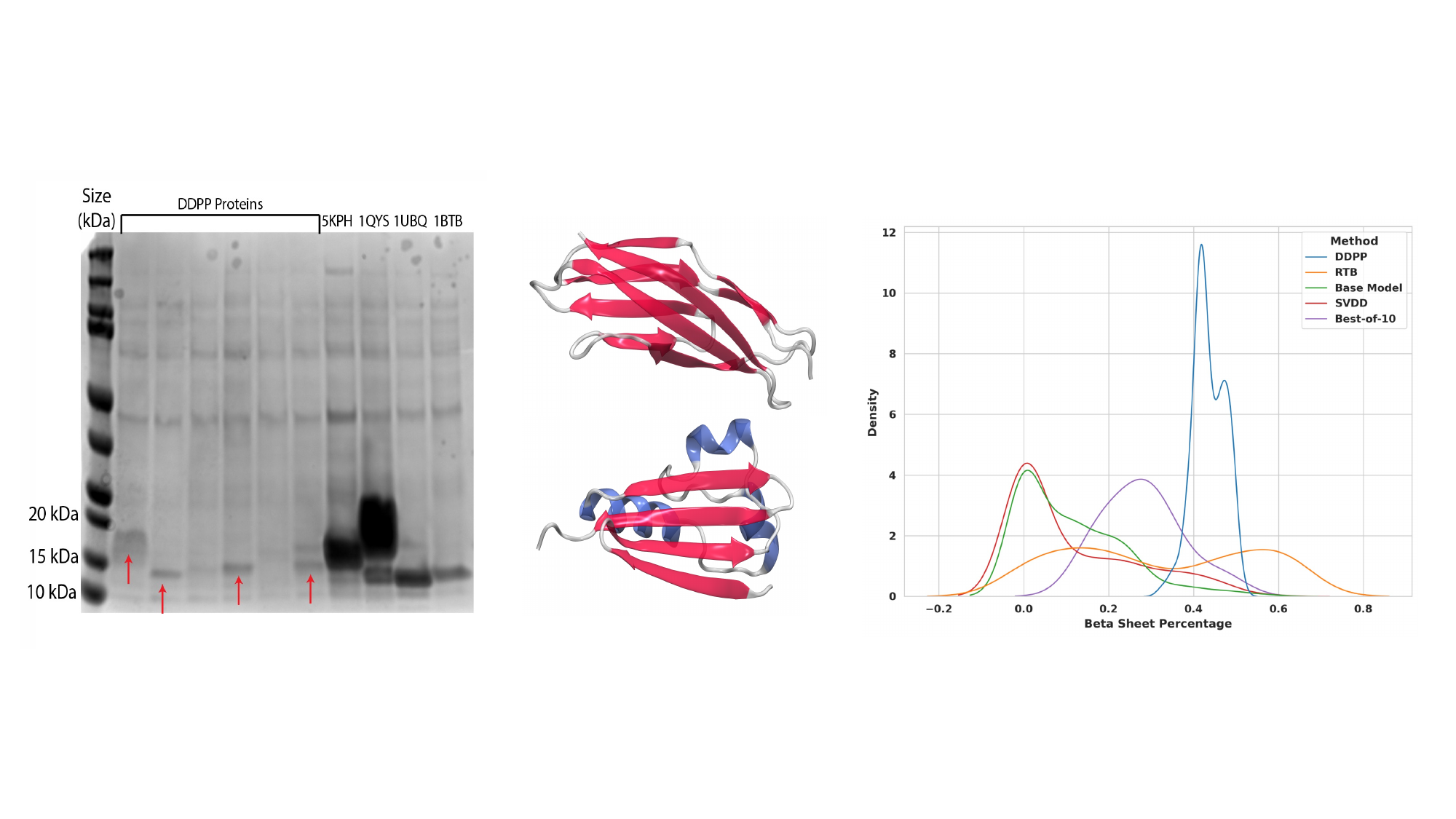} %
    \vspace{-20pt}
    \caption{\small \textbf{Left:} SDS-PAGE of elution fractions from histidine tag purification of DDPP-designed protein constructs and positive controls following Coomassie blue staining. All DDPP-designed constructs are between 7.8-8.3 kDa, though run slightly higher than their predicted molecular weight. Predicted molecular weights of positive controls 5KPH, 1QYS, 1UBQ, and 1BTB are 10 kDa, 12 kDa, 8.5 kDa, and 10 kDa, respectively. Recombinant protein bands for MDM-designed sequences are indicated with red arrows and relevant ladder references are labeled with their molecular weight. \textbf{Middle:} Folded structures generated by $\beta$-sheet fine-tuning with DDPP. \textbf{Right:} Distribution of $\beta$-sheets generated by each method.}
    \label{fig:wet_lab_results}
\end{figure}

\xhdr{Task description} We next apply \nameshort to generate high-quality protein sequences by fine-tuning discrete diffusion protein language models (DPLM)~\citep{wang2024diffusion}. Specifically, we address two experimentally relevant tasks where vanilla DPLMs underperform. We outline exact reward functions and experimental setup in \S\ref{app:protein_sequences}. First, we fine-tune DPLM to generate  soluble protein sequences with high $\beta$-sheet content. The second task, protein shrinking, involves miniaturizing known proteins by generating shorter sequences that preserve key structural features, using the TM-align score as the reward metric~\citep{devkota2024miniaturizing}. We evaluate performance by measuring designability metrics (ESMFold pLDDT and pTM) as well as task-specific metrics ($\beta$-sheet percent and TM-Score). We also provide wet-lab validation for our best designs in the designable $\beta$-sheet task. We provide full a deeper description of evaluation metrics and experimental setup in~\S\ref{app:protein_sequences}. Finally, as ESMFold is itself expensive to query and, in particular, non-differentiable we test our fastest method---\nameshort-LB. 

\looseness=-1
\xhdr{Main results}
In-silico validation shows that DDPP-LB outperforms all baselines for the designable $\beta$-sheet task, generating better sequences across all metrics. In particular DDPP achieves a significantly higher $\beta$-sheet percentage than baseline methods while maintaining high designability as measured by ESMFold (namely, high pLDDT and pTM). We further observe that for the miniaturization task, DDPP-LB outperforms all baselines in shrinking ribonuclease proteins, removing 34 residues while maintaining high structural similarity (lowest SS-KL of 0.64 and highest TM-Score of 0.361), and high structural quality with high pTM and competitive pLDDT. This demonstrates DDPP-LB's effectiveness in generating compact yet structurally faithful proteins. 

\looseness=-1
\xhdr{Experimental validation}
We selected 6 designs from \nameshort-finetuned DPLM for wet-lab validation, based on AlphaFold2 pLDDT/pTM scores. Sequences and structures were clustered using MMseqs and Foldseek~\citep{van2022foldseek,steinegger2017mmseqs2}, with two representative sequences selected from each cluster. 4 positive controls consisting of two previously validated de novo designed proteins (PDB: 5KPH, 1QYS) and two other stable proteins, ubiquitin and Barstar (PDB: 1UBQ, 1BTB) were included as a comparison. 
We expressed the designed proteins, including the controls in E. coli, and purified them using histidine-tag purification, after which we assessed expression level and purity via SDS-PAGE, followed by Coomassie staining. Our results demonstrate strong overexpression and efficient purification of the two previously validated de novo controls and moderate overexpression of ubiquitin and barstar controls (\autoref{fig:wet_lab_results}). Purified protein can also be observed for four out of the six DDPP-derived constructs, though with comparatively lower yields than the positive controls (\autoref{fig:wet_lab_results}). One potential cause of these relatively low yields may be the sizeable accumulation of DDPP-derived proteins in the insoluble fraction of the cell lysate. As such, it is likely that further optimization of the expression and purification methods (e.g., longer induction time or lower induction temperatures) may lead to significant improvements to overall soluble yields.

\subsection{Text}
\label{sec:text_experiments}
\looseness=-1
\xhdr{Task description}
We consider two text tasks: (i) toxic story generation using the Tinystories dataset \citep{eldan2023tinystories}, and (ii) product review generation using Amazon data \citep{hou2024bridging}.
For both tasks, we start by fine-tuning a pre-trained MDM model \citep{sahoo2024simple} in a supervised fine-tuning manner on both datasets before running online fine-tuning. As reward models, we use RoBERTa \citep{liu2019roberta} fine-tuned for toxicity classification, and BERT \citep{devlin2018bert}, fine-tuned for Amazon review sentiment analysis, respectively. Our experiments aim to demonstrate our method's ability to induce behaviors that are uncommon in the base pre-trained model, specifically in generating toxic content in product reviews. Full experimental details are provided in Appendix~\S\ref{app:text}.

\looseness=-1
\xhdr{Main results} In~\autoref{tab:text_results_main} we report the average log reward as well as perplexity (Gen PPL) of the generated samples as measured by GPT-2~\citep{radford2019language}. We find that \nameshort-LB is the most effective variant of \nameshort and achieves significantly higher log reward compared to SVDD and RTB for both tasks. We further observe that all methods achieve comparable Gen PPL suggesting that generated responses are fluent; however, samples from \nameshort-LB adheres better to the task specification. We refer to \S\ref{app:samples_tinystories} and \S\ref{app:samples_amazonreviews} for generated samples from \nameshort.

\begin{figure}
\begin{minipage}{0.68\textwidth}
\centering

\vspace{-5pt}
\resizebox{0.9\textwidth}{!}{
\begin{tabular}{lccc}
    \toprule
Algorithm $\downarrow$ Metric $\rightarrow$  & $\log R(\bx_0)$ $\uparrow$ & BPD $\downarrow$ \\ 
\midrule
Base        & -57.31 $\pm$ --- &  2.67 $\pm$ ---\\ 
SVDD        & -13.27 $\pm$ 12.38  & ---\\ 
\midrule
RTB         & -60.28 $\pm$ 1.74 & \textbf{2.04 $\pm$ 0.00} \\ 
\nameshort-LB (\textit{ours}) & \textbf{-6.94 $\pm$ 1.39} &  2.62 $\pm$ 0.15 \\ 
\bottomrule
\end{tabular}
}
\end{minipage}
\begin{minipage}{0.31\textwidth}
    \includegraphics[width=1\linewidth]{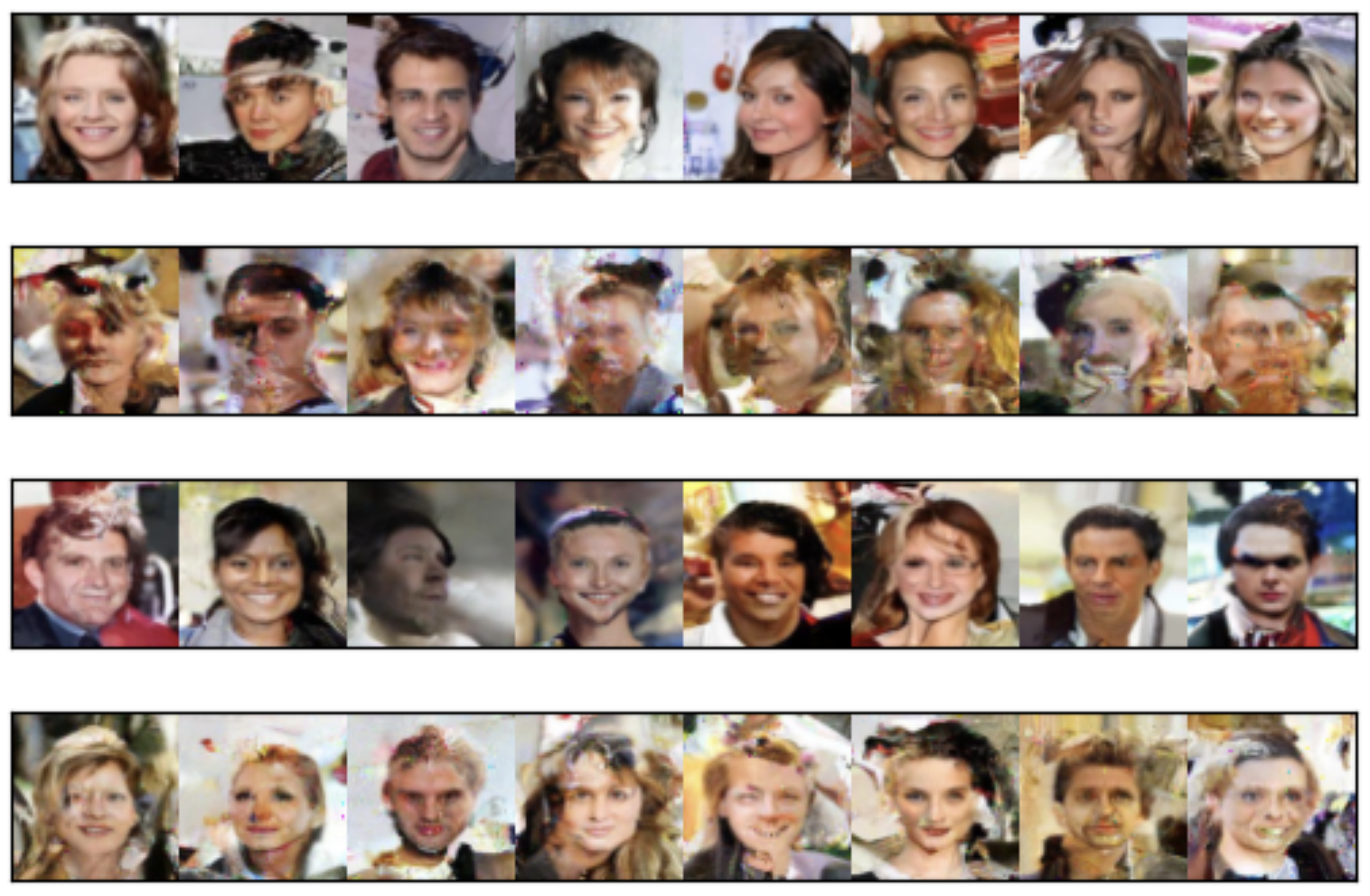}
\end{minipage}
\caption{\small \textbf{Left:} Results for discrete image modeling over raw pixel values on CelebA ($64 \times 64$). We report the mean performance of \nameshort and baselines separated into inference-based (top) and amortized (bottom) over $3$ runs for the $\log R$ and class-BPD metrics. \textbf{Right:} Generated samples from Base, SVDD, RTB, and \nameshort-LB.}
\vspace{-10pt}
\label{fig:celeba}
\end{figure}

\begin{table}[htbp]
\centering
\caption{\small Text experiments with log reward and Gen PPL results averaged over $3$. As Best of 10 draws samples directly from $p_0^{\text{pre}}(\bx_0)$ we instead bold the fine-tuning method whose Gen PPL is lowest.}
\vspace{-5pt}
\small
\begin{tabular}{lcccc}
\toprule
Dataset $\rightarrow$ &  \multicolumn{2}{c}{Tinystories} & \multicolumn{2}{c}{Amazon reviews} \\
Algorithm $\downarrow$ Metric $\rightarrow$ &  $\log R(\bx_0) \uparrow$ & Gen PPL $\downarrow$ & $\log R(\bx_0) \uparrow$ & Gen PPL $\downarrow$  \\
\midrule
Best of $10^*$ & 93.25 $\pm$ 0.17 & 15.94 $\pm$ 0.03 & -103.05 $\pm$ 0.25 & 124.45 $\pm$ 1.02 \\
SVDD    & 146.95 $\pm$ 1.08 & 20.35 $\pm$ 0.03 & -27.48 $\pm$ 10.91 & 165.86 $\pm$ 1.22  \\ 
\midrule
RTB & 107.83 $\pm$ 3.08 & \textbf{18.53 $\pm$ 0.55} & -35.22 $\pm$ 16.03 & 160.54 $\pm$ 12.19  \\
\nameshort-IS (\textit{ours}) &  163.45 $\pm$ 7.06 & 20.15 $\pm$ 0.30 & 105.16 $\pm$ 2.41 & \textbf{152.85 $\pm$ 1.64}  \\
\nameshort-LB (\textit{ours}) & \textbf{205.76 $\pm$ 3.88} & \textbf{19.60 $\pm$ 0.69} & \textbf{152.08 $\pm$ 34.01} & 167.25 $\pm$ 27.33  \\
\bottomrule
\end{tabular}
\label{tab:text_results_main}
\vspace{-5pt}
\end{table}

\section{Related Works}

\looseness=-1
\xhdr{Discrete diffusion}
The prevailing paradigms for diffusion over discrete spaces can be broadly categorized into 1.) continuous diffusion in a latent or reparametrized space by first transforming the initial discrete data~\citep{li2022diffusion,chen2022analog,davis2024fisher,cheng2024categorical}, and
2.) defining diffusion using discrete analogs of score approximation~\citep{meng2022concrete,lou2023discrete}. The latter approach can also be described using the theoretical framework of Continuous-time Markov Chains (CTMC)~\citep{austin2021structured,campbell2022continuous,campbell2024generative}. Closest to our setting we consider a specific instantiation of discrete diffusion that simplifies the CTMC framework by using a masked forward process~\citep{sahoo2024simple,shi2024simplified,zhao2024improving,gat2024discrete}.

\looseness=-1
\xhdr{Finetuning as sampling} The task of fine-tuning generative models under reward models can be viewed as a sampling problem and encompasses conventional RLHF~\citep{uehara2024understanding,black2023training,fan2024reinforcement,dong2023raft}. A simple but expensive method to sample from the reward-induced Bayesian posterior distribution is best of $N$ sampling~\citep{stiennon2020learning,nakano2021webgpt,touvron2023llama}, which provably samples from the correct distribution as the number of samples from the base pre-trained model grows, $N\to 
\infty$~\citep{beirami2024theoretical,gao2023scaling,ferbach2024self}. Alternatively, the sampling perspective has been explored in the discrete setting to fine-tune autoregressive models~\citep{zhao2024improving,hu2023amortizing,venkatraman2024amortizing}, and diffusion models~\citep{uehara2024fine}. Finally, inference time techniques represent the most prominent approach to conditional sampling~\citep{ho2022classifier,dhariwal2021diffusion, li2024derivative}. 

\cut{
\looseness=-1
\xhdr{Sampling proportional to energy} Our approach can be closely linked to learning to sample proportional to a target probability, as in our setup we aim to approximate sampling proportional to the energy $p_t^{\text{pre}}(\cdot | \bx_t) R(\cdot)$ for any point $\bx_t$ at any time $t$. This has been an avenue of research for a number of works in continuous time~\citep{bengio2021flow,bengio2023gflownet,malkin2022trajectory,lahlou2023theory,akhound2024iterated,sendera2024diffusion,de2024target}, in Bayesian posterior inference where the energy is defined by the product of likelihood and prior~\citep{mittal2023exploring}, as well as posterior inference in settings where we even do not have access to energy function but only to a simulator~\citep{radev2020bayesflow,wildberger2024flow,geffner2023compositional}. 
}

\section{Conclusion}
\label{sec:conclusion}
\looseness=-1
In this paper, we present \name a novel framework to steer Masked Discrete Diffusion Models by viewing it as a problem of sampling from a Bayesian posterior. We introduced three concrete training strategies to instantiate our framework in \nameshort-IS, \nameshort-LB, and \nameshort-KL and apply them to modeling synthetic data, pixel-level image modeling, fine-tuning protein MDMs to increase secondary structure diversity, and steering MDMs on language to match human sentiment. We find that \nameshort not only is able to optimize an amortized sampler to closely match the reward-induced Bayesian posterior but it has a good agreement in other sample quality metrics---without severely compromising generated sample quality. An interesting direction for future work is to understand how to balance optimization of \nameshort-LB and strategies to selecting $\gamma$. 

\section{Contributions statement}
J.R. conceived of the initial simulation-free fine-tuning framework, while M.H. derived the final version of the objective that can be written as a log ratio of denoisers. J.R. drove the development of code for all experimental settings, M.H. and A.T. drove the completion of image experiments, J.R. and Z.P. drove the completion of protein and text experiments, and M.H. drove the completion of toy experiments. Z.P. led TM-score protein experiments. J.R., C.L., Z.Q., and P.C. conceived of the $\beta$-sheet protein task and wet lab experimental design. Z.Q. ran all wet lab experiments. N.D. guided text experiments. S.M. helped in setting up the baselines and brainstorming about the estimators. A.J.B. drove the writing of the paper with help from M.H.. M.B., Y.B., A.T., S.M. and A.J.B. guided the project from the machine learning side, while P.C. guided it from the wet lab side. A.J.B. with help from M.H. designed the discrete (Reinmax) gradient estimator version of the objective function. J.R., A.J.B., and A.T. were responsible for the overall organization of the project.

\section{Acknowledgements}
\label{sec:acknowledgements}
\looseness=-1
The authors would like to thank Kolya Malkin, Moksh Jain, Emily Jin, Katarina Petrovic, Scott Le Roux, Ahmed Elhag, Xingyue Huang, and Vignesh Ram Somnath for their useful comments on early versions of this manuscript. 
 AJB is partially supported by an NSERC Post-doc fellowship.
 This research is partially supported by EPSRC Turing AI World-Leading Research Fellowship No. EP/X040062/1 and EPSRC AI Hub on Mathematical Foundations of Intelligence: An "Erlangen Programme" for AI No. EP/Y028872/1

The authors acknowledge funding from UNIQUE, CIFAR, NSERC, Intel, Samsung, and Dreamfold. The research was enabled in part by computational resources provided by the Digital Research Alliance of Canada (\url{https://alliancecan.ca}), Mila (\url{https://mila.quebec}), and NVIDIA.

\section{Reproducibility statement}
\label{sec:reproducibility}

\looseness=-1
We take the following steps to enhance the reproducibility of our work. In particular, all of our theoretical results include full proofs which are presented in~\S\ref{app:theoretical_results}. To assist in the reproducibility of our empirical findings we provide precise experimental details such as algorithmic descriptions of all variants of \nameshort in Algorithm \autoref{alg:ft_post_pred} and Algorithm \autoref{alg:ft_reverse_KL}. We further provide architectural choices, training details, and hyperparameters for all datasets and tasks in \S\ref{app:additional_experiment_details}.

\bibliographystyle{abbrvnat}
\bibliography{bibliography}

\clearpage
\appendix

\section{Broader Impact}
\label{sec:broader_impact}

\looseness=-1
Our proposed \name is a tailored approach to steering and fine-tuning Masked Diffusion Models. At present, MDMs are an emergent category of discrete generative models that have general-purpose modeling capabilities in a variety of domains including language modeling, sequence-based drug design, and discrete modeling of graphs. Consequently, we believe \nameshort has potential use in various practical use cases. For instance, like current RLHF techniques applied to modern autoregressive LLMs, future scaled MDMs on text datasets might be tuned to promote harmful behavior and toxic content. Moreover, applying \name in drug design use cases has the potential to create in-silico sample of protein sequences that may have biologically potent negative externalities. We do, however, make the distinction that such a risk is speculative at this stage given the large complexities of translating in-silico designs to actual synthesized biomolecules. As a result, we encourage practitioners who seek to fine-tune MDMs using \nameshort to exercise due caution when applying our proposed techniques to actual use cases.

\looseness=-1
\xhdr{Ethical statement}
As part of qualitatively evaluating \nameshort, this paper includes generated samples of text. We highlight that the set of examples may contain potentially disturbing, harmful, or upsetting examples, covering a variety of sensitive topics like discriminatory language, descriptions of harm, and misinformation, among other high-risk categories. Its primary purpose is to advance research in understanding the impact of \nameshort from a more interpretable lens. It is not advised to train future MDMs on such generated samples in order to prevent further propagation of undesirable content and behaviors.

\section{Additional Related Work}
\label{app:related_work}
\looseness=-1
\xhdr{Sampling proportional to energy} Our approach can be closely linked to learning to sample proportional to a target probability, as in our setup we aim to approximate sampling proportional to the energy $p_t^{\text{pre}}(\cdot | \bx_t) R(\cdot)$ for any point $\bx_t$ at any time $t$. This has been an avenue of research for a number of works in continuous time~\citep{bengio2021flow,bengio2023gflownet,malkin2022trajectory,lahlou2023theory,akhound2024iterated,sendera2024diffusion,de2024target}, in Bayesian posterior inference where the energy is defined by the product of likelihood and prior~\citep{mittal2023exploring}, as well as posterior inference in settings where we even do not have access to energy function but only to a simulator~\citep{radev2020bayesflow,wildberger2024flow,geffner2023compositional}. 

\section{Theoretical results}
\label{app:theoretical_results}

\subsection{Proof of Proposition \ref{prop:lowerboundlogZ}}
\label{app:proof_lowerbound}
Before proving proposition \ref{prop:lowerboundlogZ} we first prove a useful Lemma that states the optimal log partition function $\log \hat{\gZ}_{\pi_t}(\bx_t)$ which is the learning goal for a parameterized approach $\log \hat{\gZ}_{\pi_t, \theta}(\bx_t)$.

\begin{restatable}{lemma}{lemmaoptimalZ}
\label{lemma:lemmaoptimalZ}
\looseness=-1
Given a sample $\bx_t \sim p_t(\bx_t | \bx_0)$ and the denoising posterior distribution $q_{t,\theta}(\bx_0 | \bx_t)$, a local minimizer for estimate for the log partition function $\log \hat{\gZ}_{\pi_t}$ using $N$ samples from $\bx^i_0 \sim q_{t,\theta}(\bx_0 | \bx_t)$ is given by:
\begin{equation}
    \log \gZ^*_{\pi_t} = \frac{1}{N} \sum_{i=1}^N \log \left( \frac{p_t(\bx^i_{0} | \bx_t) R(\bx^i_{0})}{q_{t, \theta}(\bx^i_{0} | \bx_t)}\right). 
\end{equation}
\end{restatable}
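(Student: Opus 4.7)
The plan is to treat $\log \hat{\gZ}^{\text{LB}}_{\pi_t,\theta}(\bx_t)$ as a scalar quantity and minimize the single-step \nameshort objective $\gL^{\text{PP}}$ from \eqref{eqn:finetune_loss} over it at a fixed partially masked sample $\bx_t$. Since $\gL^{\text{PP}}$ is a mean-squared error, it is strictly convex and quadratic in that scalar, so the local (and in fact global) minimizer is obtained just by enforcing the first-order condition in that one variable.

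Concretely, I would first specialize $\gL^{\text{PP}}$ to a fixed $\bx_t$ with on-policy sampling $\bx_0 \sim q_{t,\theta}(\bx_0 \mid \bx_t)$, and abbreviate $Z := \log \hat{\gZ}^{\text{LB}}_{\pi_t,\theta}(\bx_t)$. This reduces the problem to minimizing
\[
J(Z) = \mathbb{E}_{\bx_0 \sim q_{t,\theta}(\bx_0 \mid \bx_t)}\!\bigl[\bigl(\log q_{t,\theta}(\bx_0 \mid \bx_t) - \log p_t^{\text{pre}}(\bx_0 \mid \bx_t) - \log R(\bx_0) + Z\bigr)^2\bigr].
\]
Differentiating and setting $\partial_Z J = 0$ gives the closed-form stationary point
\[
Z^{*} = \mathbb{E}_{\bx_0 \sim q_{t,\theta}(\bx_0 \mid \bx_t)}\!\left[\log \frac{p_t^{\text{pre}}(\bx_0 \mid \bx_t)\, R(\bx_0)}{q_{t,\theta}(\bx_0 \mid \bx_t)}\right],
\]
and $\partial_Z^2 J = 2 > 0$ confirms that this is the unique minimizer.

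Finally, I would replace the expectation by its $N$-sample Monte Carlo estimate using i.i.d.\ draws $\bx_0^i \sim q_{t,\theta}(\bx_0 \mid \bx_t)$, which produces exactly the formula asserted in the lemma. There is no real technical obstacle here since the optimization is one-dimensional and admits a closed form; the only subtlety worth spelling out is that the relevant sampling distribution is on-policy with respect to $q_{t,\theta}$ (as assumed in the \nameshort-LB setup), which is what makes the first-order condition collapse to the clean log-ratio form used in the statement.
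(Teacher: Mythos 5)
Your proposal is correct and follows essentially the same route as the paper: treat $\log \hat{\gZ}_{\pi_t}(\bx_t)$ as a scalar constant, observe the objective from \eqref{eqn:finetune_loss} is quadratic in it, and solve the first-order condition to get the averaged log-ratio (the paper works directly with the $N$-sample empirical loss, while you optimize the population objective and then plug in the Monte Carlo estimate, which yields the identical formula since the quadratic's minimizer is the sample mean).
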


\begin{proof}
By definition the log partition function is a constant, let that constant be $\log \gZ_{\pi_t}(\bx_t) = C$. Then the loss in \eqref{eqn:finetune_loss} is a quadratic in $C$,
\begin{equation}
    \gL = \mathbb{E}_{\bx_0 \sim r(\bx_0) }\left[||\log q_{t, \theta}(\bx_0 | \bx_t) + C - \log p_t(\bx_0 | \bx_t) - \log R(\bx_0)||_2^2\right]
    \label{eqn:constant_loss}
\end{equation}
For a batch of $N$ samples of $\bx^i_0 \sim q_{t, \theta}(\bx_0 | \bx_t)$, we find a locally optimal constant $C$ (local minima) by taking the gradient of \eqref{eqn:constant_loss} and setting it $0$. In more detail we have,
\begin{align}
    0 &= \nabla_C \frac{1}{N} \sum_i^N(\log q_{t, \theta}(\bx_0 | \bx_t) + C - \log p_t(\bx_0 | \bx_t) - \log R(\bx_0))^2 \\
    0 &= \frac{2}{N} \sum_i^N\left( \log q_{t, \theta}(\bx^i_{0} | \bx_t) + C - \log p_t(\bx^i_{0} | \bx_t) - \log R(\bx^i_{0}) \right) \\
    0 &= 2C + \frac{2}{N} \sum^N_{i} \log q_{t, \theta}(\bx^i_{0} | \bx_t) - \log p_t(\bx^i_{0} | \bx_t) - \log R(\bx^i_{0})\\
    0 &= C + \frac{1}{N} \sum^N_{i} \log \left( \frac{q_{t, \theta}(\bx^i_{0} | \bx_t)}{p_t(\bx^i_{0} | \bx_t) R(\bx^i_{0})}\right) \\
    C &= \frac{1}{N} \sum^N_{i} \log \left( \frac{p_t(\bx^i_{0} | \bx_t) R(\bx^i_{0})}{q_{t, \theta}(\bx^i_{0} | \bx_t)}\right). 
\end{align}

\end{proof}

Using Lemma \ref{lemma:lemmaoptimalZ} we now prove Proposition \ref{prop:lowerboundlogZ}, stated again below for convenience.

\lowerboundlogZ*

\begin{proof}
We optimize $\log \hat{\gZ}^{\text{LB}}_{\pi_t, \theta}(\bx_t)$ using the loss defined in \eqref{eqn:finetune_loss}. Using Lemma \ref{lemma:lemmaoptimalZ} we know the analytic expression for the locally optimal estimate is given by $\log \gZ^*_{\pi_t}(\bx_t)$. Plugging this into the definition of the log partition function we get,  
\begin{align}
    \log \hat{\gZ}^{\text{LB}}_{\pi_t, \theta}(\bx_t) &= \mathbb{E}_{\bx_0 \sim q_{t, \theta}(\bx_0 | \bx_t)}\left[\log \left( \frac{p_t(\bx_{0} | \bx_t) R(\bx_{0})}{q_{t, \theta}(\bx_{0} | \bx_t)} \right) \right] \\
    &\leq \log \mathbb{E}_{q_{t, \theta}(\bx_0 | \bx_t)}\left[ \frac{p_t(\bx_{0} | \bx_t) R(\bx_{0})}{q_{t, \theta}(\bx_{0} | \bx_t)} \right]  \\
    &= \log \hat{\gZ}^{\text{IS}}_{\pi_t}(\bx_t)
\end{align}


The lower bound turns into equality at the optimal proposal $q_{t, \theta}(\bx_0 | \bx_t) \propto p_t(\bx_0 | \bx_t) R(\bx_0)$.

\end{proof}

\subsection{Estimating \nameshort-KL with Reinmax}
\label{app:Reinmax}
We first provide an algorithmic description below of training using \nameshort-KL. We first highlight how the reverse KL objective can be applied to a more general setting beyond just fine-tuning before turning to the exact setting of the main paper.

\begin{algorithm}[H]
\caption{\nameshort-KL}
\label{alg:ft_reverse_KL}
    \small
\textbf{Input}: Differentiable reward $R(\bx_0)$, base MDM $p_0^\text{pre}(\bx_0 | \bx_t)$, fine-tuning MDM $q_\theta(\bx_0 | \bx_t)$, Num samples $K$  
\begin{algorithmic}[1]
\While{Training}
\State{$t, \bx_0\sim \gU[0,1], q(\bx_0)$} \Comment{Sample time and clean data on-policy from the fine-tuning MDM}
\State{$\bx_t \sim p_t(\bx_t | \bx_0)$ }\Comment{Construct a partially masked sample given clean data}
\State{$\{\hat{\bx}^{i}_0\}^K_{i=0} \sim q_{t,\theta}( \cdot | \bx_t)$} \Comment{Reparametrized Sampling of clean data}
\State{ $\mathcal{L}^{\text{KL}} = \frac1K\sum^K_{i=1}\left(\log q_{t,\theta}(\hat{\bx}^{i}_0 | \bx_t) - \log p_0^{\text{pre}}(\hat{\bx}^{i}_0 | \bx_t) - \log R(\hat{\bx}^{i}_0)\right)$}
\State $\nabla_{\theta} \gL^{\text{KL}} := \nabla^{\text{Reinmax}}_\theta \left(\gL^{\text{KL}}\right)$ \Comment{Use the Reinmax discrete gradient estimator}
\State $\theta \leftarrow \text{Update}(\theta, \nabla_\theta \gL^{\text{KL}})$
\EndWhile
\State{\textbf{Return} $q_{\theta}$} 
\end{algorithmic}
\end{algorithm}

\xhdr{Non-finetuning Case}
In this appendix, we study the \textsc{Reinmax} gradient estimator for the general problem of sampling from the following distribution:
\begin{equation}
    \pi_0(\bx_0) \propto \frac{R(\bx_0)}{\gZ}.
\end{equation}

\xhdr{Gradient of $\gL^{\text{KL}}$}
We can decompose the gradient into the following terms due to the linearity of expectations:
\begin{align}
     \gL^{\text{KL}}_t &= \mathbb{E}_{t, \bx_0, \bx_t} \left[ \log q_{t, \theta} (\bx_0 | \bx_t)\right] - \mathbb{E}_{t, \bx_0, \bx_t} \left[\log \pi_t(\bx_0 |\bx_t)  \right] \nonumber \\
     &=  \gL^{1}_t +  \gL^{2}_t. 
     \label{eqn:reverse_kl_loss_decomposition}
\end{align}
\looseness=-1
We again highlight the fact that the expectation is taken using the following distributions $t, \bx_0, \bx_t \sim \gU[0,1], q(\bx_0), p_t(\bx_t | \bx_0)$. As a result, $\bx_0$ is drawn on-policy and is a stochastic variable that needs gradient estimation since $q_{\theta}$ is the parameterized distribution. Furthermore, all terms that use \emph{this sample $\bx_0$} inside the expectation are affected by this gradient computation.

Taking the gradient of each term respectively. The gradient of of $\gL^1_t$ is:
\begin{align}
    \nabla_{\theta}\gL^{1}_t &=  \nabla_{\theta}\left(\mathbb{E}_{t, \bx_0, \bx_t} \left[ \log q_{t, \theta} (\bx_0 | \bx_t)\right]\right) \nonumber \\
    &\approx \mathbb{E}_{t, \bx_0\sim q_{\theta}(\bx_0), \bx_t \sim p_t(\bx_t | \bx_0)}\left[\nabla^{\text{Rein-Max}} \circ \left(\log q_{t,\theta} (\bx_0 | \bx_t) \right) \right].
    \label{eqn:l1_reinmax}
\end{align}
The gradient of of $\gL^2_t$ is:
\begin{align}
    \nabla_{\theta}\gL^{2}_t &=  \nabla_{\theta}\left(\mathbb{E}_{t, \bx_0, \bx_t} \left[ \log \pi_t (\bx_0 | \bx_t)\right]\right) \nonumber\\
    &= \nabla_{\theta}\left(\mathbb{E}_{t, \bx_0, \bx_t} \left[- \log p_t(\bx_t|\bx_0) - \log \pi_0(\bx_0) + \log \pi_t(\bx_t)\right] \right) \nonumber \\
    &= \nabla_{\theta}\left(\mathbb{E}_{t, \bx_0, \bx_t} \left[- \log p_t(\bx_t|\bx_0) -\log R(\bx_0) + \log \left(\sum_{\bx_0'}\pi_t(\bx_t|\bx_0')R(\bx_0')\right)\right] \right).
    \label{eqn:l2_prereinmax_full}
\end{align}
To use the Reinmax gradient estimator we must compute $\partial f(z) / \partial z$, where $f$ is the function inside the expectation $\mathbb{E}_z[f(z)]$. We now make use of the following facts:

\begin{enumerate}[label=\textbf{(F\arabic*)},left=0pt,nosep]
\item \xhdr{Analytic expression of $\nabla_{x_0} \log p_t(x_t | x_0)$} For simplicity of presentation, we focus on a single token $x^i_0$ in a sequence but the result remains true for the entire sequence $\bx_0$.
Recall in the discrete setting of masked diffusion models $p_t = \text{Cat}(x_0; \bar{Q}_tx_t)$, which allows us to write:
\begin{align}
    \nabla_{x^i_0} \log p_t(x^i_t | x^i_0) &= \frac{\nabla_{x^i_0} p_t(x^i_t | x^i_0)}{p_t (x^i_t | x^i_0)}\\
    &= \frac{\nabla_{x^i_0} \text{Cat}(x^i_0; \bar{Q}_tx^i_t)}{ \text{Cat}(x^i_0; \bar{Q}_tx^i_t)} \\
    &= \frac{\nabla_{x^i_0} (x^{i,T}_0 \bar{Q}_t x^i_t)}{x^{i,T}_0 \bar{Q}_t x^i_t} \\
    &= \frac{\nabla_{x^i_0} (\alpha_t \langle x^i_t, x^i_0\rangle + (1-\alpha_t) \langle x^i_t, e_m\rangle )}{\alpha_t \langle x^i_t, x^i_0\rangle + (1-\alpha_t) \langle x^i_t, e_m\rangle} \\
    &= \frac{\alpha_t x^i_t}{\alpha_t \langle x^i_t, x^i_0\rangle + (1-\alpha_t) \langle x^i_t, e_m\rangle}.
    \label{eqn:analytric_grad_log_pt}
\end{align}
\item \xhdr{Differentiability of the reward $\nabla_{\bx_0} R(\bx_0)$} If we assume the reward is differentiable we can exploit the same trick to write:
\begin{align}
\nabla_{\bx_0}\log R(\bx_0) = \frac{\nabla_{\bx_0}R(\bx_0)}{R(\bx_0)}. 
\end{align}
\end{enumerate}
\looseness=-1
Note that the final term in \eqref{eqn:l2_prereinmax_full} does not depend on the realization of the sample $\bx_0 \sim q(\bx_0 | \bx_t)$ and thus its gradient in Rein-max is $0$. This enables us to write the approximate gradient as:
\begin{align}
    \nabla_{\theta}\gL^{2}_t & \approx 
    \mathbb{E}_{t, \bx_0, \bx_t} [\nabla^{\text{Reinmax}} \circ \left(- \log p_t(\bx_t|\bx_0) -\log R(\bx_0)\right)] \nonumber \\
    &=  \mathbb{E}_{t, \bx_0, \bx_t} \left[\left(- \sum_i^N \frac{\alpha_t x^i_t}{\alpha_t \langle x^i_t, x^i_0\rangle + (1-\alpha_t) \langle x^i_t, e_m\rangle} - \frac{\nabla^{\text{Reinmax}}_{\bx_0}R(\bx_0)}{R(\bx_0)}\right)\right].
\end{align}
The first term in the equation has a closed-form expression for the gradient but is still a stochastic gradient since it depends on $\bx_0 \sim q_{\theta}(\bx_0)$.

\xhdr{Finetuning Case}
In the fine-tuning setting we aim to sample from the following Bayesian posterior:
\begin{equation}
    \pi_0(\bx_0) \propto \frac{p^{\text{pre}}_0(\bx_0) R(\bx_0)}{\gZ}
\end{equation}

For MDMs the likelihood under the model $p_0^{\text{pre}}(\bx_0)$ is intractable to evaluate and leads to a modified objective for gradient estimation with \textsc{Reinmax} in $\gL^{\text{KL}}_t$ in \eqref{eqn:reverse_kl_loss_decomposition}:
\begin{align}
    \nabla_{\theta}\gL^{2}_t &=  \nabla_{\theta}\left(\mathbb{E}_{t, \bx_0, \bx_t} \left[ \log \pi_t (\bx_0 | \bx_t)\right]\right) \nonumber\\
    &= \nabla_{\theta}\left(\mathbb{E}_{t, \bx_0, \bx_t} \left[- \log p^{\text{pre}}_t(\bx_0|\bx_t) -\log R(\bx_0) + \log \gZ_{\pi_t}(\bx_t)\right] \right) \nonumber \\
    &=  \nabla_{\theta}\left(\mathbb{E}_{t, \bx_0, \bx_t} \left[- \log p^{\text{pre}}_t(\bx_0|\bx_t) -\log R(\bx_0) + \log \left(\mathbb{E}_{\bx'_0 \sim p^{\text{pre}}_t (\bx_0 \mid \bx_t)}[R(\bx'_0)] \right) \right] \right).
\end{align}
Note that in the equation above we can evaluate the log partition function using samples drawn from the denoising posterior of the pre-trained model $\bx_0' \sim p_t^{\text{pre}}(\bx_0 | \bx_t)$ and \emph{not} the on-policy samples $\bx_0 \sim q_{\theta}(\bx_0)$. Thus this term is a constant when we compute the gradient. Thus we have, 

\begin{align}
    \nabla_{\theta}\gL^{2}_t & \approx 
    \nabla^{\text{Reinmax}} \circ \left(\mathbb{E}_{t, \bx_0, \bx_t} \left[- \log p^{\text{pre}}_t(\bx_0|\bx_t) -\log R(\bx_0)\right]\right).
\end{align}

\subsection{Equivalence of sub-trajectory objectives}
\label{app:equivalence_of_objectives}

\looseness=-1
In this appendix, we detail how to compute an efficient approximation of the loss function that is inspired by the KL divergence between sub-trajectories as found in the GFlowNet literature but adapted for MDMs. 

\looseness=-1
Consider the trajectory of a sequence: $\tau(\bx_{0:1}) := \bx_1 \to \dots \to \bx_t \to \bx_{t-1} \to \dots \bx_0$. We seek to minimize the joint distribution over the (sub)-trajectories conditioned on a partially masked sample $\bx_t$:
\begin{equation}
    q_{\theta}(\bx_0, \dots, \bx_{t-1}|\bx_t, \mu_{\theta}(\bx_t, t)) p_t(\bx_t)  =  \pi_{t}(\bx_0, \dots, \bx_{t-1} | \bx_t) p(\bx_t).
\end{equation}
Here $ \pi_{t}(\bx_1, \dots, \bx_{t-1} | \bx_t, \bx_0)$ is defined as,

\begin{align}
     \pi_{t}(\bx_0, \dots, \bx_{t-1} | \bx_t, \mu_{\theta}(\bx_t, t)) & = \frac{p_t^{\text{pre}}(\bx_0, \dots, \bx_{t-1}| \bx_t) R(\bx_0)}{\gZ_{\pi_t}(\bx_t)} \\
     & = \frac{\prod^{t}_{j=1}p_t^{\text{pre}}(\bx_{j-1} 
 | \bx_j, \hat{\bx}^{\text{pre}}_0) R(\bx_0)}{\gZ_{\pi_t}(\bx_t)}
\end{align}

We minimize the following KL divergence,
\begin{equation}
    \KL(q_{\theta}(\bx_0, \dots, \bx_{t-1}|\bx_t, \hat{\bx}_0) p_t(\bx_t) ||\pi_t(\bx_0, \dots, \bx_{t-1} | \bx_t) p(\bx_t) ).
\end{equation}
Here we used the convention that $\hat{\bx}_0 = \mu_{\theta}(\bx_t, t) $ and $ \hat{\bx}^{\text{pre}}_0 = \mu^{\text{pre}}(\bx_t, t)$. The KL between path measures along the sub-trajectory shares the same optimum as the following loss objective:
\begin{align}
    \gL_{\tau} &= \mathbb{E}_{t, \bx_t} \Big[ \mathbb{E}_{\tau(\bx_{0:t})}[ \|\log q_{\theta}(\bx_0, \dots, \bx_{t-1}|\bx_t, \hat{\bx}_0 )) - \log p_t^{\text{pre}}(\bx_0, \dots, \bx_{t-1}| \bx_t) + \kappa \|^2_2 ] \Big] \nonumber\\
    &=  \mathbb{E}_{t, \bx_t} \Big[ \mathbb{E}_{\tau(\bx_{0:t})}\Big[ \Big\| \sum^{t}_{j=1}\log q_{\theta}(\bx_{j-1}|\bx_j, \hat{\bx}_0) - \log p_t^{\text{pre}}(\bx_{j-1},| \bx_j, \hat{\bx}^{\text{pre}}_0) + \kappa \Big\|^2_2\Big] \Big] \\
    & = \mathbb{E}_{t, \bx_t, \tau(\bx_{0:t})}\Big[ \Big\|\sum^{t}_{s=1}\log q_{\theta}(\bx_{s-\gamma}|\bx_s,   \hat{\bx}_0) - \log p_t^{\text{pre}}(\bx_{s-\gamma},| \bx_s, \hat{\bx}^{\text{pre}}_0)  + \kappa \Big\|^2_2\Big] \nonumber\\
    & = \mathbb{E}_{t, \bx_t, \tau(\bx_{0:t})}\left[\left\| t \mathbb{E}_{s, \bx_s, \bx_{s-\gamma}} \left[\log q_{\theta}(\bx_{s-\gamma}|\bx_s,   \hat{\bx}_0) - \log p_t^{\text{pre}}(\bx_{s-\gamma},| \bx_s,\hat{\bx}^{\text{pre}}_0)  + \kappa \right] \right\|^2_2 \right].
    \label{eqn:path_KL}
\end{align}
\looseness=-1
In the last equation we define the constant $\kappa = (- \log R(\bx_0) + \log \gZ_{\pi_t}(\bx_t))/t$ and use the fact that our notation convention uses $p(X=x) = p(x)$ for discrete random variables. Now we make the observation that for any $s < t$ we have effectively picked an endpoint over the trajectory. More precisely, $s \sim \gU[0, t]$, which also allows us to sample $\bx_s \sim p_t(\bx_s | \bx_0)$, in an analogous manner to how $\bx_t$ is constructed.

\section{Additional experimental details}
\label{app:additional_experiment_details}
\looseness=-1
All experiments were performed on a shared heterogenous high-performance computing cluster. This cluster is primarily composed of GPU nodes with RTX8000, V100, A100, L40S, and H100 NVIDIA GPUs. We briefly note a trick used across a number of our experiments for \nameshort-LB described as warming up $\log Z_t(x_t)$. We found that early iterations of training \nameshort-LB could be somewhat unstable as the parameterized normalizing constant was not calibrated to a proper range given the pre-trained model and reward function. As such, we found that warming up $\log Z_t(x_t)$ for some number of steps at the beginning of training by only allowing gradient flow through the $\log Z_t(x_t)$ term helped stabilize training and improve overall performance. For the runs on which warming up $\log Z_t(x_t)$ was utilized, we resume normal training (i.e., allowing gradient flow through the fine-tuned denoiser \textit{and} $\log Z_t(x_t)$) after the warmup period has concluded. For all experiments with \nameshort-LB we used another separate, small DiT to parameterize the $\log Z_t(x_t)$ prediction.

\subsection{Synthetic Experiments}
\label{app:synthetic_exp}

Two synthetic tasks were performed: (1) sampling from a posterior over a 2 dimensional grid, and (2) fine-tuning on binarized MNIST. In both cases a 90 million parameter MDM model was trained on samples from the prior distribution, with the same DiT architecture as in \citet{sahoo2024simple}. 

\subsubsection{Grid Experiment} 
The space consists of discrete tokens $\bx_0 \in \{0, \dots, 127\}^2$. A prior density $p_0^{\text{pre}}$ is defined over this space which assigns a uniform probability for tokens falling inside one of the 16 evenly spaced squares, and a near-zero probability outside this. This prior distribution is depicted in \autoref{fig:grid}(a). Pre-training was done using the Adam optimizer, with $\beta_1, \beta_2 = \{0.9, 0.999\}$, and a learning rate of $3\textrm{e}{-4}$.  

The reward function $R(\bx_0) = 0$ for $x_0^1 < 64$, and $R(\bx_0) = 1$ for $x^1_0 \geq 64$. This results in a fine-tuning target $\propto R(\bx_0) p^{\text{pre}}(\bx_0)$ which selects out only the squares in the lower half of the grid. This product distribution is 
visualized in \autoref{fig:grid}(b). 

For fine-tuning we train the model using our loss-functions with the Adam optimizer, using a learning rate of $4e-3$, $\beta_1, \beta_2 = \{0.9, 0.999\}$, and a weight decay of $0$ across all methods. \nameshort-IS used 16 samples to estimate the partition function. Training is done using a replay buffer populated with points $\bx_0$ sampled on policy from the model, as well as off-policy points from the prior distribution, added to the buffer every $100$ training steps. A batch of $64$ is used.

\looseness=-1
\subsubsection{MNIST}

This task consisted of generating binarized MNIST digits $\bx_0 \in \{0,1\}^{28 \times 28}$. The prior $p^{\text{pre}}(\bx_0)$ in this case is the MNIST data distribution. For pre-training, the Adam optimizer is used with a learning rate of $4e-3$, $\beta_1, \beta_2 = \{0.9, 0.999\}$ and a weight decay of $0$. 

This MDM is fine-tuned to produce even digits. More precisely, the reward function is $R(\bx_0) = p(\text{Even} \mid \bx_0)^{\beta} = \big(\sum_{i = 0,2,4,6,8} p( y = i \mid \bx_0) \big)^\beta$, with $p(y=i \mid \bx_0)$ being obtained from a pretrained MNIST classifier (LeNet 5 in this case). The inverse-temperature $\beta$ is set to $5$ for all experiments. 

For fine-tuning with our methods, we use Adam with a learning rate of $1e-5$ and $\beta_1, \beta_2 = \{0.9, 0.999\}$. Training is done with a batch-size of $64$. Samples are drawn from  a replay-buffer populated with only on-policy samples. Method specific hyperparameters include:
\begin{itemize}
    \item \nameshort-IS: the importance sampling estimate is done with $16$ samples
    \item \nameshort-LB: a learning rate of $1e-3$ is used for network layers estimating $\log \gZ_{\pi_t}$
    \item \nameshort-KL: The KL objective per $\bx_t$ is computed using $8$ samples
\end{itemize}

RTB is trained with a learning rate of $5e-5$, with weight decay $0.01$, on trajectories of length $32$ with a batch size of $8$. For training, $30\%$ of the steps are detached. The smaller batch-size 
is chosen to fit the training on 80GB of GPU memory. 

SVDD uses $10$ particles in each inference step.

For all methods (including baselines), inference is done with $128$ steps.

Additional information on computation of metrics is included in \ref{app:image_metrics}.


\subsubsection{MNIST Samples}

Samples from our methods, as well as the pretrained model, are shown in \autoref{fig:app_mnist_samples}.

\begin{figure}[htbp]
    \vspace{-10pt}
    \centering
    \begin{minipage}{0.6\linewidth}
        \centering
        \includegraphics[width=\textwidth]{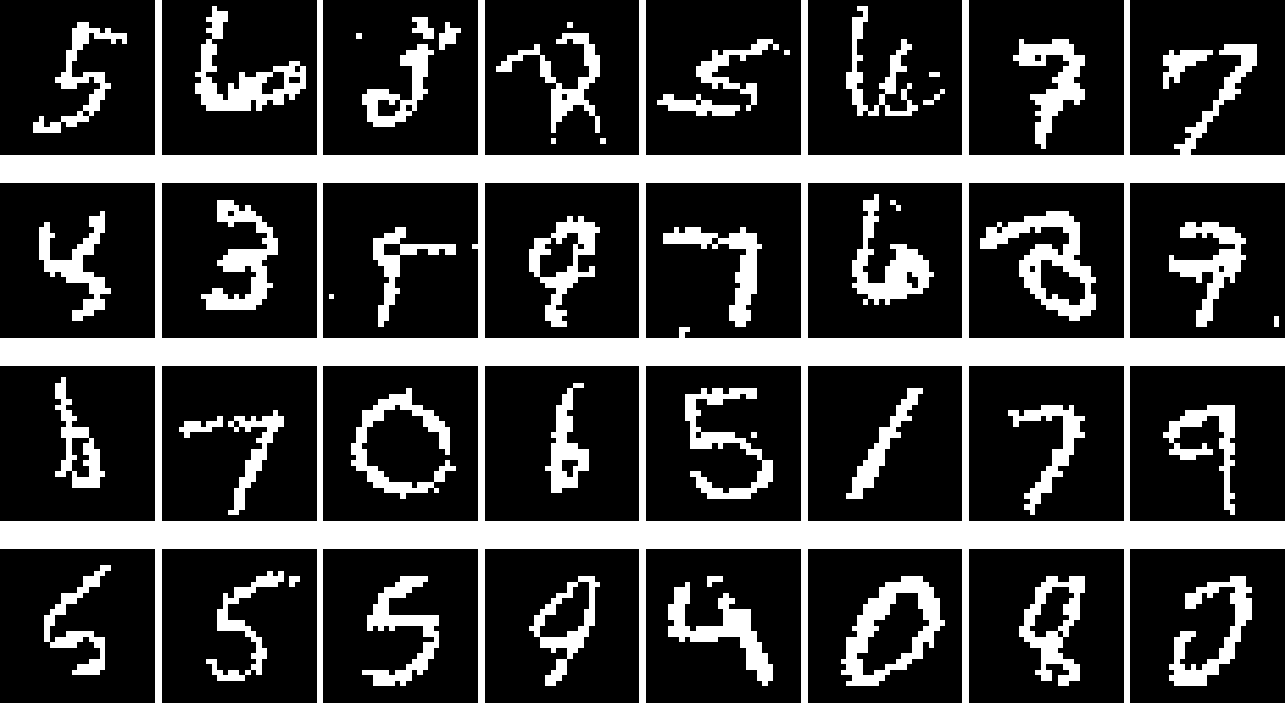} %
        \subcaption{Pretrained model}
        \label{fig:mnist_pretrained}
    \end{minipage}\\
    \begin{minipage}{0.6\linewidth}
        \centering
        \includegraphics[width=\textwidth]{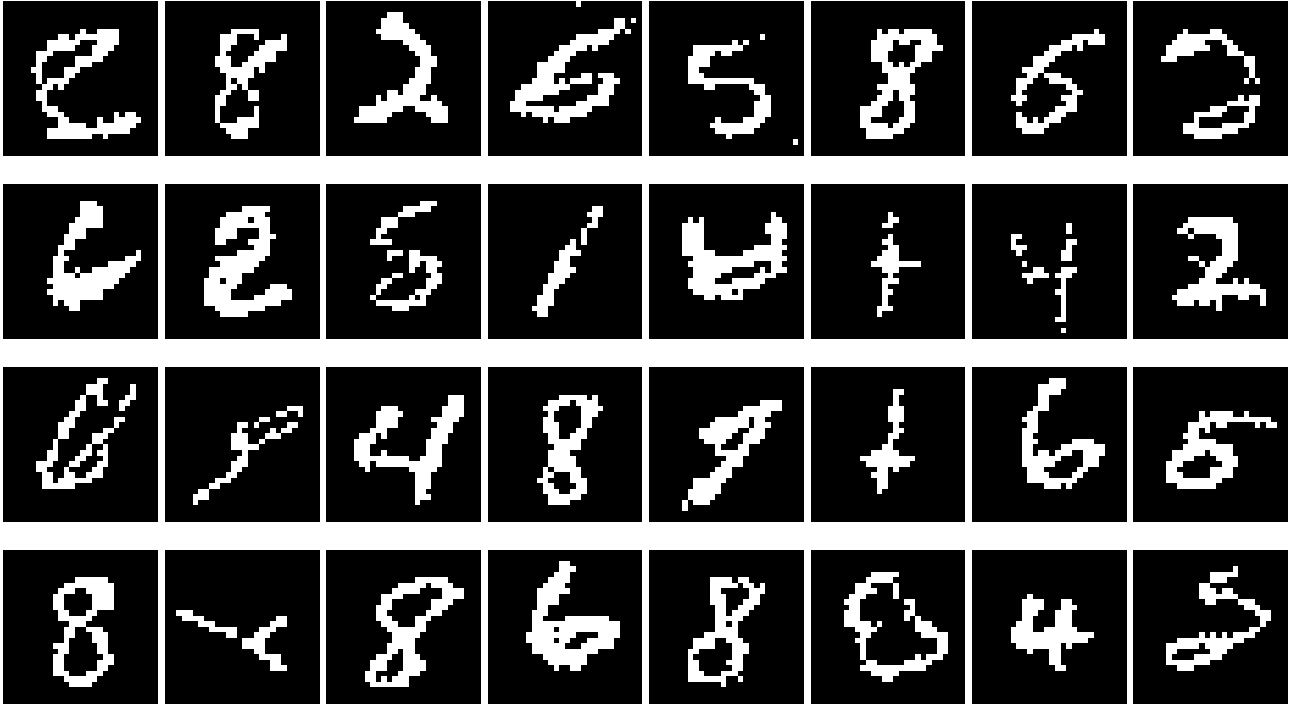} %
        \subcaption{\nameshort-IS}
        \label{fig:mnist_is}
    \end{minipage} \\
    \begin{minipage}{0.6\linewidth}
        \centering
        \includegraphics[width=\textwidth]{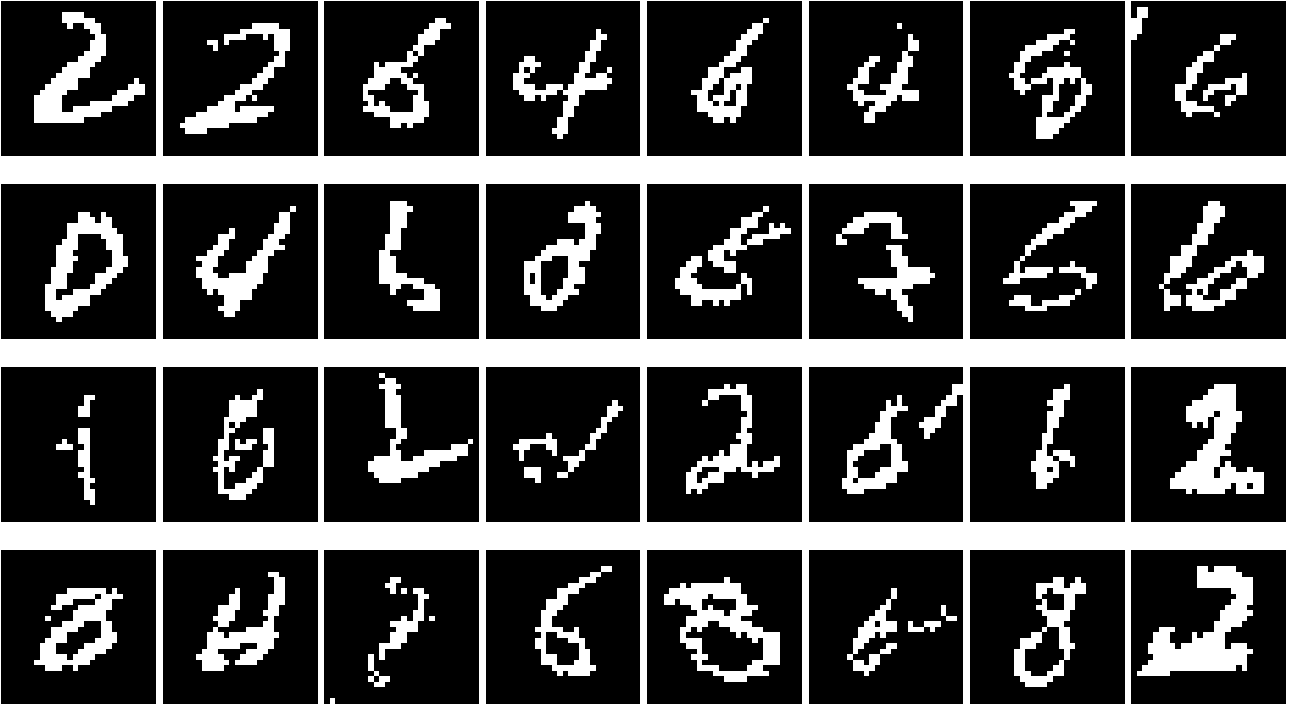} %
        \subcaption{\nameshort-LB}
        \label{fig:mnist_lb}
    \end{minipage} \\
    \begin{minipage}{0.6\linewidth}
        \centering
        \includegraphics[width=\textwidth]{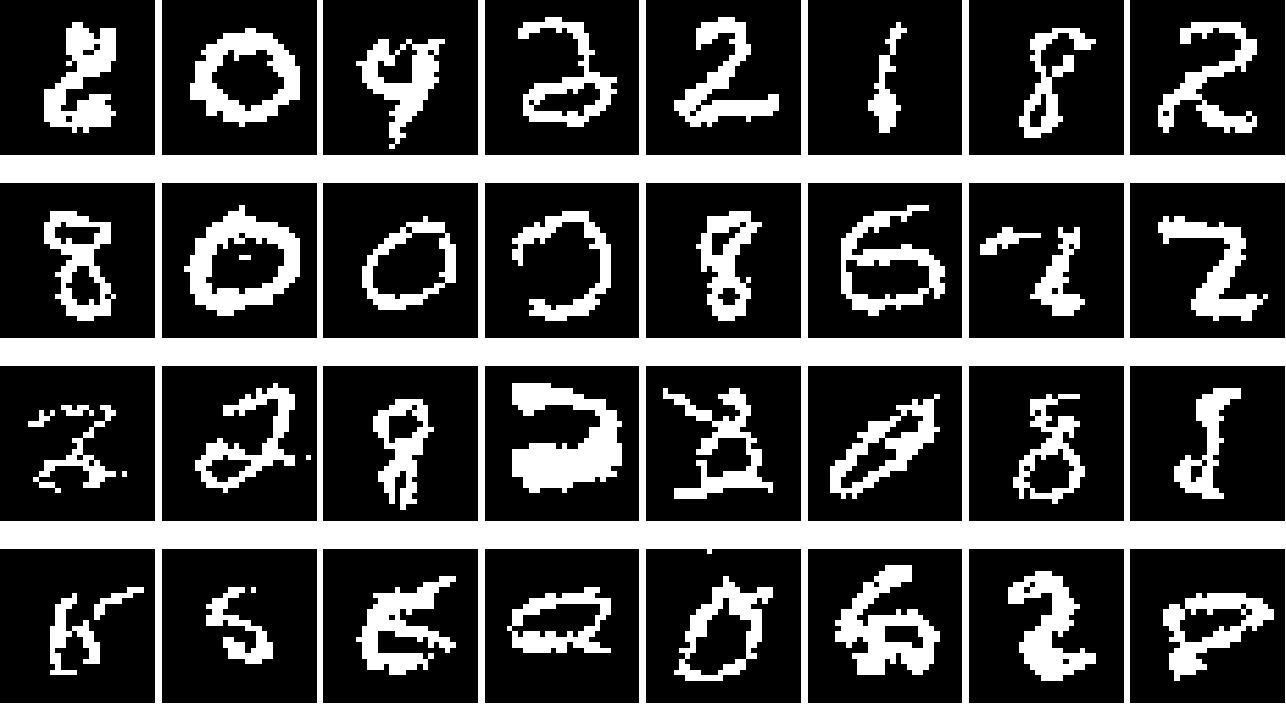} %
        \subcaption{\nameshort-KL}
        \label{fig:mnist_kl}
    \end{minipage}
    \caption{Uncurated samples from the pretrained model, and after fine-tuning with our methods: \nameshort-IS, \nameshort-LB, \nameshort-KL}
    \label{fig:app_mnist_samples}
\end{figure}

\subsection{Protein sequences}
\label{app:protein_sequences}

\looseness=-1
Protein design involves the creation of novel protein sequences that adopt specific structures and perform desired functions. This is a critical field in synthetic biology and biotechnology, as it enables the rational engineering of proteins with enhanced stability, novel functionalities, or improved therapeutic properties. Advances in machine learning-based models, such as protein language models (pLMs), have enabled rapid exploration of protein sequence space, making de novo protein design more feasible and versatile. However, current pLMs struggle in generating realistic sequences which satisfy certain criteria, and we study using \nameshort to finetune DPLM to generate high-scoring proteins given a reward function.



\subsubsection{In-silico tasks}
\label{app:insilico_eval}

\looseness=-1
In task 1, we fine-tune the DPLM model to generate designable protein sequences that optimize for several critical features, including high predicted template modeling (pTM) and predicted local distance difference test (pLDDT) scores from ESMFold, reduced exposed hydrophobic residues, high sequence entropy, and an increased proportion of $\beta$-sheet content~\citep{hie2022high}. These optimizations are captured in the reward function \( R \), given by:

\begin{align*}
\log R &= w_{\text{pTM}} \cdot \text{pTM} + w_{\text{pLDDT}} \cdot \text{pLDDT} + w_{\text{Sheet}} \cdot \text{Sheet\%} 
 \\ & + w_{\text{Entropy}} \cdot H(\mathbf{s}) - w_{\text{Hpho}} \cdot {\text{Exposed\_Hpho\%}}
\end{align*}

Where the terms represent:

\begin{itemize}
    \item \( \text{pTM} \) and \( \text{pLDDT} \): Structural confidence scores from ESMFold, measuring global and local accuracy, respectively.
    \item \( \text{Sheet\%} \): The proportion of residues predicted to form $\beta$-sheets, determined by DSSP~\citep{kabsch1983dictionary}.
    \item \( H(\mathbf{s}) \): Sequence entropy, defined as:
    \[
    H(\mathbf{s}) = -\sum_{i=1}^{L} \sum_{a} p_i(a) \log p_i(a),
    \]
    where \( L \) is the length of the sequence and \( p_i(a) \) is the probability of amino acid \( a \) at position \( i \).
    \item \( \text{Exposed\_Hpho\%} \): Percentage of hydrophobic residues exposed on the surface, calculated based on solvent-accessible surface area.
\end{itemize}

The weights for these features are set as follows:

\[
w_{\text{pTM}} = 1, \quad w_{\text{pLDDT}} = 1, \quad w_{\text{Sheet}} = 4.5, \quad w_{\text{Entropy}} = 0.8, \quad w_{\text{Hpho}} = 0.25.
\]

As the scale of the various reward terms are non-uniform we selected the reward weights to weight all rewards similarly besides the sheet percent reward which is weighted higher. For the $\beta$-sheet task we found that both RTB and DDPP faced issues with mode collapse. After investigating the protein structures generated by base DPLM we found that the base model is only capable of generating a small number of motifs (in particular, over 2k samples from the base model we found only two motifs with $\log R(x_0) \geq 3.5$), implying that the targeted product distribution indeed collapses around these structural motifs as we observe in the case of RTB and DDPP. As such, we conclude that DDPP (and RTB) achieve the goal of fine-tuning as they sample from the product distribution and reproduces samples with $\beta$-sheets at a much higher proportion than the base model.

In task 2, we focus on generating shorter sequences of known proteins that preserve essential structural characteristics, using the TM-align score as the reward function~\citep{devkota2024miniaturizing}. This task allows the exploration of mutational effects. Ribonuclease proteins (PDB IDs: 9RAT-A, 11BA-A) are selected for this task due to their well-characterized structure, function, and folding mechanisms.

The reward function \( R \) is defined as:

\[
R = w_{\text{tm\_score}} \cdot \text{TM-align}(\mathbf{s}, \mathbf{t}).
\]

Where:
\begin{itemize}
    \item  $w_{\text{tm\_score}}$: The weight of the TM-Score reward, set to 2.
    \item \( \mathbf{s} \): Predicted structure from ESMFold of  the generated sequence.
    \item \( \mathbf{t} \): Target protein structure.
    \item \( \text{TM-align} \): A measure of structural similarity between \( \mathbf{s} \) and \( \mathbf{t} \), defined as:
    \[
    \text{TM-align} = \max \left( \frac{1}{L_t} \sum_{i=1}^{L_{\text{ali}}} \frac{1}{1 + \left( \frac{d_i}{d_0} \right)^2} \right).
    \]
    where \( L_t \) is the length of the target protein, \( L_{\text{ali}} \) is the length of the aligned region, \( d_i \) is the distance between the \( i \)-th pair of aligned residues, and \( d_0 \) is the distance scale based on \( L_t \)~\citep{zhang2005tm}.
\end{itemize}

While not used in the reward function for either experimental setting, we also measure the KL divergence, reported as KL-SS in \autoref{tab:protein_experiments} between the secondary structure distribution given by DSSP for both the target and miniaturized protein.

Note that in these experiments, the number of recycles in ESMFold is set to 0 to reduce computational overhead. For both tasks we generate amino acid sequences of length 90. Evaluation is performed by sampling 200 proteins for each method across three seeds and reporting the mean and standard deviation of each metric accordingly. All methods ran 500 inference steps during evaluation. All protein experiments used a 150 million parameter DPLM base model\footnote{\url{https://huggingface.co/airkingbd/dplm_150m}} to begin fine-tuning from. All models used a log-linear noise schedule with $\sigma_{min}=1\textrm{e}{-4}$ and $\sigma_{max}=20$ and used a linear learning rate warmup period of 2500 training steps.

DDPP was trained with no warmup period for $\log Z_t(x_t)$, a learning rate of $1\textrm{e}{-5}$, a batch size of 16, a replay buffer of max length 10,000, and inserting new batches to the buffer sampled on-policy from the current model every 250 training steps. RTB was trained similarly, but with a smaller batch size to account for its greater memory requirements. RTB matches the setting of DDPP but with a batch size of 8 while doing 90 inference steps during training (a new batch of trajectories is simulated on-policy every training step). To allow RTB to fit in memory we detached $65\%$ of trajectory timesteps when computing a backward pass on the RTB objective. SVDD was run on the base DPLM model with $n=10$ particles.  To control the concentration of our designated target distributions, we set the reward temperature $\beta=0.125$ for the $\beta$-sheet task and $\beta=0.001$ for the protein miniaturization task.

\begin{table}[ht]
\centering
\caption{Miniaturizing ribonuclease proteins 9RAT-A and 11BA-A (124 AAs) to 90 AAs while preserving structural fidelity (high TM-Score) and quality (high pLDDT and PTM).}
\setlength{\tabcolsep}{4pt} 
\renewcommand{\arraystretch}{1.2} 
\small 
\begin{tabular}{lllllll}
\toprule
Template & & SS-KL $\downarrow$ & $\log R(\bx_0)$ $\uparrow$ & TM-Score $\uparrow$ & pLDDT $\uparrow$ & pTM $\uparrow$ \\
\midrule
\multirow{5}{*}{9RAT-A} & Base Model & 2.944 $\pm$ 2.936 & 0.502 $\pm$ 0.128 & 0.251 $\pm$ 0.064 & 0.724 $\pm$ 0.144 & 0.584 $\pm$ 0.226 \\
& Best-of-10 & \textbf{0.640 $\pm$ 1.872} & 0.725 $\pm$ 0.098 & 0.363 $\pm$ 0.049 & 0.789 $\pm$ 0.018 & 0.754 $\pm$ 0.086 \\
& DDPP & 1.086 $\pm$ 2.242 & \textbf{0.735 $\pm$ 0.122} & \textbf{0.368 $\pm$ 0.061} & 0.793 $\pm$ 0.044 & \textbf{0.768 $\pm$ 0.066} \\
& RTB & 1.808 $\pm$ 2.597 & 0.597 $\pm$ 0.109 & 0.299 $\pm$ 0.055 & \textbf{0.796 $\pm$ 0.054} & 0.750 $\pm$ 0.084 \\
& SVDD & 3.465 $\pm$ 2.835 & 0.699 $\pm$ 0.079 & 0.350 $\pm$ 0.039 & 0.499 $\pm$ 0.137 & 0.383 $\pm$ 0.178 \\
\midrule
\multirow{5}{*}{11BA-A} & Base Model & 3.136 $\pm$ 3.150 & 0.478 $\pm$ 0.101 & 0.239 $\pm$ 0.051 & 0.724 $\pm$ 0.144 & 0.584 $\pm$ 0.226 \\
& Best-of-10 & 2.602 $\pm$ 3.309 & 0.654 $\pm$ 0.089 & 0.327 $\pm$ 0.045 & 0.782 $\pm$ 0.027 & 0.720 $\pm$ 0.109 \\
& DDPP & \textbf{0.194 $\pm$ 1.009} & \textbf{0.709 $\pm$ 0.048} & \textbf{0.354 $\pm$ 0.024} & 0.743 $\pm$ 0.036 & 0.727 $\pm$ 0.054 \\
& RTB & 2.579 $\pm$ 2.799 & 0.564 $\pm$ 0.111 & 0.282 $\pm$ 0.056 & \textbf{0.797 $\pm$ 0.058} & \textbf{0.744 $\pm$ 0.101} \\
& SVDD & 3.240 $\pm$ 2.992 & 0.647 $\pm$ 0.079 & 0.324 $\pm$ 0.040 & 0.486 $\pm$ 0.124 & 0.354 $\pm$ 0.162 \\
\bottomrule
\end{tabular}
\label{tab:extended_miniaturizing_results}
\end{table}

We report an extended version of \autoref{tab:protein_experiments} where we include results for both ribonuclease targets in \autoref{tab:extended_miniaturizing_results}.  We observe that DDPP consistently achieves the highest TM-Score across the two templates while maintaining high structural quality with an average pLDDT of around 0.8.


\subsubsection{Experimental validation}
Genes encoding for de novo protein sequences were obtained from Integrated DNA Technologies (IDT) and cloned into pET-24a(+) (Novagen) expression vectors with a C-terminal 6xHis tag using Gibson Assembly (New England Biolabs, NEB). Assembled plasmids were verified via Sanger sequencing, then transformed into chemically competent \textit{Escherichia coli} BL21(DE3) cells (NEB). Starter cultures (3 mL Luria Bertani media, 50 µg/mL kanamycin) were inoculated from freshly prepared agar plates and grown at 37°C and shaken at 225 RPM overnight. Starter cultures were then diluted 1:100 into 50 mL LB medium supplemented with antibiotic. Cultures were then grown at 37°C and 225 RPM until an optical density (OD600) of 0.5-0.7 was reached. Protein expression was then induced with 1 mM isopropyl $\beta$-D-thiogalactopyranoside (IPTG) for 4 hours at 37°C. Cells were then collected by centrifugation (4,500xg) at 4°C and resuspended in lysis buffer (Tris-buffered saline (TBS), 25 mM imidazole). Cell suspensions were then lysed via sonication (10s pulses, 40\% amplitude). The corresponding lysate was centrifuged at 12,000xg for 30 minutes, and the supernatant was loaded into a HisPur Ni-NTA His-spin column (ThermoScientific) and purified as recommended. Expression of purified proteins in both the soluble and insoluble fraction, as well as his-tag purification fractions, was assessed using SDS-polyacrylamide gel electrophoresis. 

\subsection{Discrete image modelling}
\label{app:discrete_image_modeling}

\looseness=-1
To setup the finetuning task we first pre-train large masked diffusion models on the original dataset. This uses a standard masked diffusion loss as explored in previous work~\citep{shi2024simplified,sahoo2024simple}.

\xhdr{CelebA Pretraining} We train a 241 million parameter model based on the variational diffusion model (VDM) architecture~\citep{kingma2023variationaldiffusionmodels} and the setup of \citet{shi2024simplified}. We adapted the U-Net plus self-attention architectures from \citet{kingma2023variationaldiffusionmodels} as used in CIFAR-10 in their experiments, with a few notable additions. We replace the Fourier feature inputs with an input embedding layer which embeds 257 (256 pixel values + <MASK>) tokens into the embedding dimension. We double the number of residual blocks from 32 to 64 per encoder / decoder, and double the embedding dimension from 128 to 256. We use an Adam optimizer with learning rate $1e-3$, $\beta_1$=0.9 and $\beta_2$=0.999. We train our model for 450k steps with batch size 128 on a cluster of 16 NVIDIA L40S GPUs. We resize all CelebA images to 64x64 with bilinear interpolation. Samples from this model can be seen in \autoref{fig:celeba-pretrained-samples}.

Separately, we train a 7M parameter classifier to classify hair color on CelebA. We use this as our energy function with a temperature setting of $0.1$ for all finetuning experiments.

\begin{figure}
    \centering
    \includegraphics[width=0.99\linewidth]{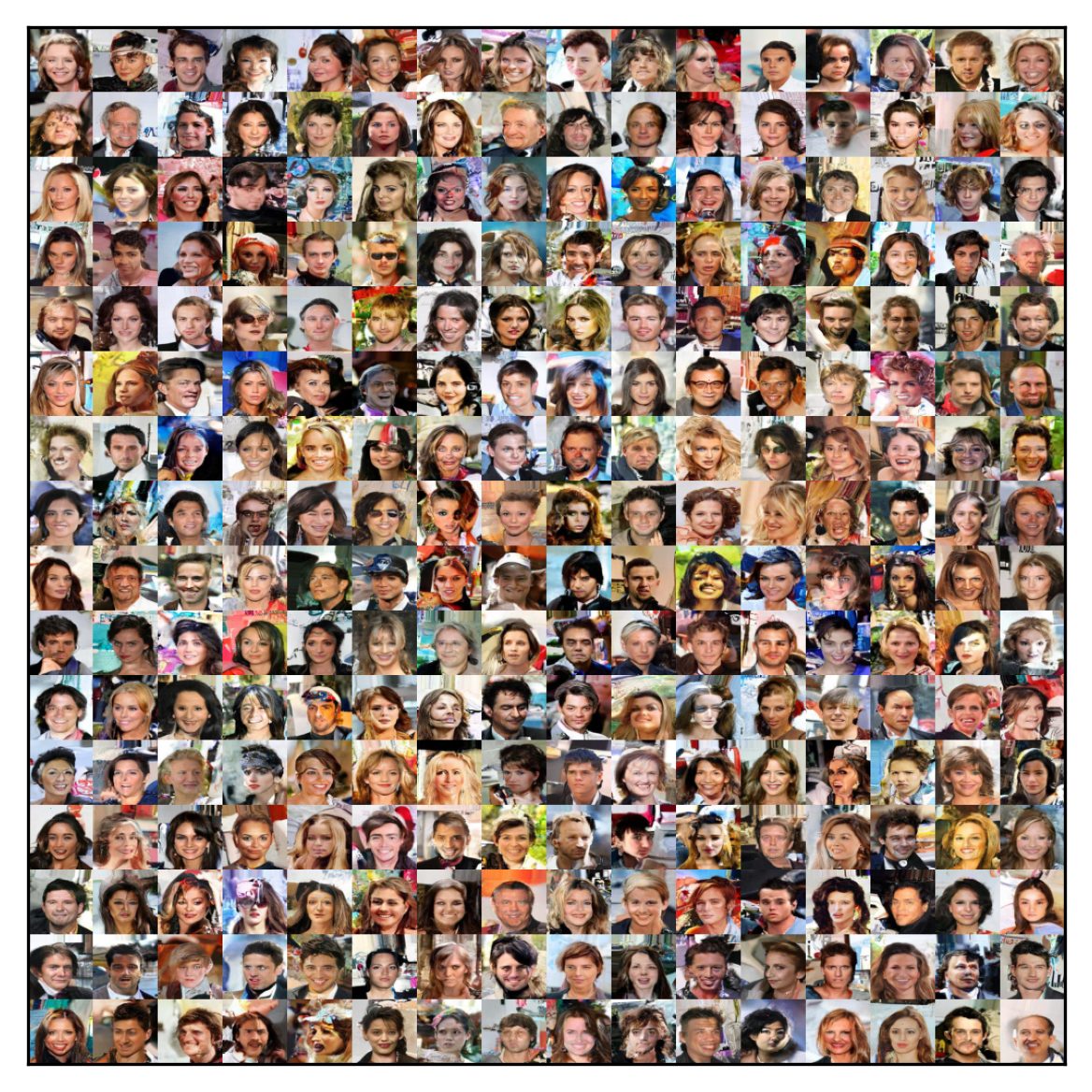}
    \caption{Uncurated pre-trained CelebA model samples using a discrete generative model.}
    \label{fig:celeba-pretrained-samples}
\end{figure}

\xhdr{CelebA Finetuning} With the problem setup, we next finetune our pretrained model to sample images with blond hair. We train each model for up to 12 A100 hours. We use an early stopping criteria based on a validation set using an approximate bits-per-dimension calculation using the ELBO. We find that the original needs at least $1\,000$ inference steps for good performance therefore we evaluate all models in this setting. For our model we use $1\,000$ warmup steps for $\log Z$, a learning rate of $1e-4$, we resample two batches every 500 gradient steps of the model and add them to the replay buffer. 

In contrast to our model, RTB requires a full trajectory for each gradient step. For CelebA, this means a rollout of $1\,000$ inference steps taking approximately 2 minutes for a batch size of 2 on an A100 with this model. Because of memory constraints we detach 99\% of inference steps and use a batch size of 2 to fit in 80GB of GPU memory with a global batch size of 8 trajectories per gradient step. 

\subsubsection{Metrics}
\label{app:image_metrics}

The metrics used to evaluate image fine-tuning include mean log reward, feature-likelihood divergence (FLD), and bits per dimension (BPD). 

\xhdr{FLD} For FLD, we draw $K$ samples from the model, and $K$ samples from the test set restricted to the target class. The FLD is computed using the DINOV2 feature space (from the ViT-B14 model) between these two sets of samples \citep{oquab2024dinov}. For MNIST, $K = 5$k.

\xhdr{BPD} An upper bound on the log-likelihood is computed using the MDM ELBO loss (on the fine-tuned model), and this is normalized (by the number of pixels and $\log 2$) to obtain the reported BPD metric. This metric is computed on the test set restricted to the target-class.


\subsection{Text experiments}
\label{app:text}

All text experiments begin by starting from the pretrained MDLM\footnote{\url{https://huggingface.co/kuleshov-group/mdlm-owt}} consisting of 170 million parameters. We then do supervised fine-tuning to produce a model capable of producing output of the desired format before proceeding with online fine-tuning. For all text experiments we train using the Adam optimizer with $\beta_1, \beta_2 = \{0.9, 0.999\}$ and weight decay of $0$. For both tasks SVDD was run with $n=10$ particles. Evaluation was done by training each method for one day across three seeds and generating $1000$ samples from the best checkpoint according to the mean reward generated during training. We report both the mean reward of the $1000$ samples across three seeds and their standard deviations, as well as the generative perplexity according to a GPT2-Large model with 812 million parameters \footnote{\url{https://huggingface.co/openai-community/gpt2-large}}. 

\subsubsection{Tinystories}
\label{app:samples_tinystories}

\begin{table}[htbp]
\centering
\caption{\small Curated samples from DDPP-LB on Tinystories.}
\vspace{-5pt}
\begin{tabular}{p{10.5cm}}
\toprule
Generated Text  \\
\midrule
Once upon a time, there was a little girl named Lily. One day, Lily went to the park with her mom. She loved to run fast and laugh. Lily saw a funny butterfly and ran it too. She fell fast and hurt a tree. Lily's knee hurt and she cried. But her mommy kissed her cheeks and said, ``Be careful when you run, Lily.'' \\ 
\midrule
Once upon a time, there was an elderly wolf. He lived in a big den against the woods. One day, the wolf felt very tired and wanted a place where there was a big tree to eat on. So, he went to sleep all day. But, while he was waking up, he saw a big, icy creature. The quickly jumped up, but his legs were too weak. The creature took the elderly wolf away. And that is how winter ended. \\


\midrule
Once upon a time, there was a little bird. His wings were weak and he fell down. The bird wanted his wing to restore him. So, he flapped his wings with his weak heart. \\ 
\midrule
Once upon a time, there was a little boy named Timmy. He loved going to the woods with his family. One day, Timmy's friend Johnny came to the woods to play. Johnny was excited to go outside and play. \\ \\

They found a big tree with words said "I reverse," and Timmy would play on its branches. He said, "reverse!" and pushed the tree. Then, they ran and laughed. \\ \\

But then, they heard a loud noise coming from the bushes and scared them. It was a big, mean bear! Timmy tried to reverse and run away, but he wasn't fast enough. The bear chased him and caught him up with its sharp hands. \\ \\

Timmy was very scared and never went back to the woods again. \\
\midrule
Once upon a time, on a calm blue sea, there was a small boat. The boat had sailors. They lived happily in the day water. One day, the water was very hot. So, they all decided to soak up and have a picnic. \\ \\ 

But, by the time, the sailors started to play a game. They swam around and counted, ", two, three soon!" and all the sailors kept playing. They water flowers and trees, and everyone laughed. \\ \\

But then, a clumsy heavy sailor hurt his head on a rock. "Ouch!" he cried. His friends helped him up and said, "Be careful next time!" The sailor felt better and they all laughed. They knew they could play again and have fun on a calm day soon. \\
\bottomrule
\end{tabular}
\label{tab:tinystories_curated_samples}
\end{table}

To obtain a base model we performed supervised fine-tuning from the base MDLM model on the Tinystories dataset. As all methods were prompted with the text ``Once upon a time'' during training, we restricted the SFT dataset to only datapoints whose stories started with the text ``Once upon a time,'', resulting in a corpus of 977,921 examples. SFT was done using Adam, $\beta_1, \beta_2 = \{0.9, 0.999\}$, and a learning rate of $4\mathrm{e}{-3}$ over 60,000 training steps using 4 NVIDIA A100 GPUs. All models were trained for up to 24 GPU hours on NVIDIA L40S GPUs. Fine-tuning checkpoints were selected based upon the iteration with best average reward when sampling a new training batch from the model. All methods employ a learning rate schedule with a linear warmup for the first 2,500 training steps and keep the noise schedule provided by the pre-trained MDLM model -- a log-linear schedule with $\sigma_{min}=1\textrm{e}{-4}$ and $\sigma_{max}=20$. All evaluations were performed by taking 1000 samples for each method across three seeds. We provide a set of curated samples in Table~\ref{tab:tinystories_curated_samples}.

The reward function for this task was selected to be a pre-trained classifier\footnote{\url{https://huggingface.co/nicholasKluge/ToxicityModel}} \citep{kluge2024dynamic}. The classifier is a RoBERTa model with 125 million parameters which was fine-tuned on a curated subset of various toxicity/harmlessness datasets. The reward $R(x_0)$ is then set to the likelihood of a sequence being toxic under the pre-trained classifier so that $R(x_0) = p(a=1|x_0)$ where $p(a=1|x_0)$ denotes the likelihood of the sequence $x_0$ possessing a toxic sentiment. We select this task as it allows a demonstration of how our method can recover rare behavior under the pre-trained model while still maintaining sample quality. We consider as our target distribution the tempered reward distribution $\pi_0(x_0) \propto p_0^{\textrm{pre}}(x_0)R(x_0)^{1/\beta}$ with $\beta = 0.25$.  

For DDPP-LB we used 1,500 warmup steps for $\log \gZ_{\pi_t}(x_t)$, a learning rate of $1\textrm{e}{-4}$ and a batch size of 16. We employ a replay buffer with a max length of 10,000 and sample training batches uniformly from the buffer.  The buffer is filled every 50 training steps with a batch sampled on-policy from the current fine-tuned model, while every 250 steps a batch from the SFT training dataset is added to the buffer. We use EMA with a decay rate of $\epsilon=0.9999$, a learning rate of $1\textrm{e}{-4}$, and train without LoRA. DDPP-LB was trained using 64 inference steps for simulation. DDPP-IS employed the same hyperparameter settings as DDPP-LB except that it dispelled with learning $\log \gZ_{\pi_t}(x_t)$ and instead estimated it with $K=16$ Monte Carlo samples from the one-step pre-trained posterior $p_t^{\textrm{pre}}(x_0|x_t)$.

As RTB cannot fit all timesteps of a trajectory into memory during the backwards pass, we detached 55\% of timesteps where each trajectory consisted of 32 timesteps.  RTB was trained with LoRA enabled, a LoRA rank of 16, and a learning rate of $5\textrm{e}{-5}$.  Due to memory constraints the batch size was set to 4. SVDD was run by using $n=10$ particles per inference timestep. Best of N (with $N=10$) sampling was performed by taking 10 samples from the SFT model and selecting the sample with highest likelihood under the reward model.

\subsubsection{Amazon reviews}
\label{app:samples_amazonreviews}

\begin{table}[htbp]
\centering
\caption{\small Curated samples from DDPP-LB on Amazon review generation task.}
\vspace{-5pt}
\begin{tabular}{p{10.5cm}}
\toprule
Generated Text  \\
\midrule
Cheap. Poor fit. The pants return immediately and the fabric was like a burlap sack bag-Wanted it for a gift-It’s crap. Broke in one day. Customer service never responded.<br />Very cheap! \\ 
\midrule
Everything was horrible. \\
\midrule
It’s the worst one I’ve ever bought.. you have to keep it in your bag for my daughter and they, seriously feel embarrassed if you ever got it it falls out and it ripped.. returning this. That said hated this bag till I see it!!!!!! (Cnaven at the neckline and it is super too short. Color is off white. Maybe I have to fix if I want ironing<br />What a waste of valu money \\
\midrule
Such poor material!!! It was like a plastic. Way too small so I returned.It was smaller than the size listed \\
\midrule
I don’t feel this product has any quality. My sunglasses was delivered broken. \\
\midrule
Cheap piece of garbage. Got it for my niece for Halloween and it broke inó one time \\
\bottomrule
\end{tabular}
\label{tab:amazon_curated_samples}
\end{table}

We again begin by first performing supervised fine-tuning from the base MDLM model, but this time on the fashion split of the Amazon Reviews dataset \citep{hou2024bridging} restricted to reviews consisting of at most 512 tokens, resulting in an SFT dataset of size. We perform SFT using Adam with $\beta_1, \beta_2 = \{0.9, 0.999\}$ and a learning rate of $4\textrm{e}{-3}$ and EMA with decay parameter $0.99$. The SFT model was trained for 85,000 training steps on 4 NVIDIA A100 GPUs. As for the tinystories task fine-tuning checkpoints were selected based upon the iteration with best average reward when sampling a new training batch from the model. All methods employ a learning rate schedule with a linear warmup for the first 2,500 training steps and keep the noise schedule provided by the pre-trained MDLM model -- a log-linear schedule with $\sigma_{min}=1\textrm{e}{-4}$ and $\sigma_{max}=20$. All evaluations were performed by taking 1000 samples for each method across three seeds.

The reward function for this task was a BERT model consisting of 167 million parameters fine-tuned on Amazon customer reviews\footnote{\url{https://huggingface.co/LiYuan/amazon-review-sentiment-analysis}} to predict a review's star-rating. We then set the reward $R(x_0) = p(a=1|x_0)$, the likelihood under the pre-trained classifier that the generated sample is a one-star review. We consider as our target distribution the tempered reward distribution $\pi_0(x_0) \propto p_0^{\textrm{pre}}(x_0)R(x_0)^{1/\beta}$ with $\beta = 0.5$.

For DDPP-LB we used 1,500 warmup steps for $\log \gZ_{\pi_t}(x_t)$, a learning rate of $1\textrm{e}{-4}$ and a batch size of 16. We employ a replay buffer with a max length of 10,000 and sample training batches uniformly from the buffer.  The buffer is filled every 5 training steps with a batch sampled on-policy from the current fine-tuned model, while every 250 steps a batch from the SFT training dataset is added to the buffer. We use EMA with a decay rate of $\epsilon=0.9999$, a learning rate of $1\textrm{e}{-4}$, and train without LoRA. DDPP-LB was trained using 64 inference steps for simulation. DDPP-IS employed the same hyperparameter settings as DDPP-LB besides not learning $\log \gZ_{\pi_t}(x_t)$ and instead estimating it with $K=16$ Monte Carlo samples from the one-step pre-trained posterior $p_t^{\textrm{pre}}(x_0|x_t)$.

As RTB cannot fit all timesteps of a trajectory into memory during the backwards pass, we detached 78.5\% of timesteps where each trajectory consisted of 64 timesteps.  RTB was trained with LoRA enabled, a LoRA rank of 16, and a learning rate of $5\textrm{e}{-5}$.  Due to memory constraints the batch size was set to 4. SVDD was run by using $n=10$ particles per inference timestep. Best of N (with $N=10$) sampling was performed by taking 10 samples from the SFT model and selecting the sample with highest likelihood under the reward model.

We provide a set of curated samples for the Amazon task in Table~\ref{tab:amazon_curated_samples}.

\cut{
\section{Various Discrete DEM Objectives}
In this appendix, we cover DEM-like ideas for learning to sample from an energy function given a masked diffusion process.

\subsection{Concrete scores and mean prediction}
\label{app:concrete_score_matching}

We continue by first describing the so called ``concrete score'' -- an object which generalizes the Stein score in continuous spaces, before briefly describing the connections between the concrete score and the successful training objectives for discrete diffusion models.

Recall that in continuous space diffusion we wish to learn the score function $\nabla \log p_t(x_t)$. The concrete score generalizes the idea of a score (and gradient) to discrete space.  In particular, assume that the transition space is endowed with a graph structure $G = (S, E)$ where $S$ is the set of states and $E$ is the set of edges.  Then, for a particular state $x_t$ denote its neighborhood $N(x_t)$ as $N(x_t) = \{x : (x_t \rightarrow x) \in E\}$.  The concrete score for some probability density $p$ defined on $S$ is then written as

\begin{align}
    s_p(x) &= \frac{L[p(x)]}{p(x)} \\
    &= \frac{[p(x_{n_1}) - p(x), \dots, p(x_{n_M}) - p(x)]^T}{p(x)}
\end{align}

where the $x_{n_i}$ range over $x$'s neighbors $N(x)$. It can be shown that if we consider the concrete score in continuous space it converges exactly to the Stein score.

In continuous space diffusion we make use of the denoising score identity to derive a tractable learning objective.  The denoising score objective tells us that we can write the score of the marginal density $p_t$ in terms of a weighted mixture of the scores of the transition kernel.  In continuous space the denoising score identity is written as 

\begin{equation}
    \nabla_{x_t} \log p_t(x_t) = \int \nabla_{x_t} \log p_t(x_t|x_0) p_t(x_0|x_t) dx_0
\end{equation}

It turns out that by exploiting that the operator $L[\cdot]$ is linear we can write an equivalent identity for the concrete score as follows (this proof is taken from the original concrete score matching paper)

\begin{align*}
    s_{p_t}(x_t) &= \frac{L[p_t(x_t)]}{p_t(x_t)} \\
    &= \frac{[p_t(x_{n_1}) - p_t(x_t),\dots]^T}{p_t(x_t)} \\
    &= \frac{[\sum_{x_0}p_t(x_{n_1}|x_0)p_0(x_0) - p_t(x_t|x_0) p_0(x_0),\dots]^T}{p_t(x_t)} \\
    &= \frac{1}{p_t(x_t)} \sum_{x_0}p_0(x_0) [p_t(x_{n_1}|x_0) - p_t(x_t|x_0),\dots]^T\\
    &= \frac{1}{p_t(x_t)}\sum_{x_0}p_0(x_0)L[p_t(x_t|x_0)] \\
    &=\sum_{x_0}\frac{L[p_t(x_t|x_0)]}{p_t(x_t|x_0)}\frac{p_0(x_0)p_t(x_t|x_0)}{p_t(x_t)} \\
    &= \sum_{x_0}s_{p_t(\cdot|x_0)}(x_t)p_t(x_0|x_t)
\end{align*}

Notably, the recent performant methods choose to maximize an ELBO.  As such, they require to directly parameterize and learn the reverse transition kernel $p_\theta(x_s|x_t)$ for $s < t$. They choose to do this in a way equivalent to mean prediction as they set $p_\theta(x_s|x_t) = \textrm{Cat}(x_s; \mu_{\theta,t}(x_t))$ where

\begin{equation*}
    \mu_{\theta,t}(x_t) = \begin{cases}
        \textrm{softmax}(f_{\theta,t}(x_t)) & \textrm{if $x_t = m$} \\
        x_t & \textrm{otherwise}
    \end{cases}
\end{equation*}

This is rather nice, as $\mu_{\theta,t}$ then directly parameterizes the expectation of $p_\theta(x_s|x_t)$ since $\mathbb{E}_{x\sim \textrm{Cat}(x; p)}[x] = p$. It can be shown that for the masking process the concrete score can be related as a linear transformation of the posterior expectation of $x_0$ as in continuous score matching 
\begin{equation}
\label{eqn:csm_posterior_exp}
    s_{p_t}(m) = \frac{\alpha_t}{1-\alpha_t}\mathbb{E}[x_0|x_t=m]
\end{equation}
where $m$ is the MASK token.  In the above we only consider the concrete score for mask tokens as once the reverse process has transitioned a mask token to a non-mask token the token maintains its state for the rest of the trajectory.

\subsection{Discrete flow matching}
Discrete flow matching models have also recently been proposed.  We will consider the discrete flow matching setting from \citet{gat2024discrete} here. In discrete flow matching we aim to learn a marginal ``generating velocity field'' $u_t(x_t)$ which generates a probability path $p_t(x_t)$. The probability path $p_t(x_t)$ is written in \ref{eqn:marginal_likelihood}, except with allowing for more free-form setting of the conditional likelihood $p_t(x_t|x_0)$, including allowing for couplings $\pi(x_0,x_1)$ so that we can write

\begin{equation}
    p_t(x_t) = \sum_z p_t(x_t|z) \pi(z)
\end{equation}

in the independent couplings case, $z = (x_0, x_1)$ and $\pi(z) = \pi(x_0,x_1) = p_0(x_0)p_1(x_1)$.  As in (discrete) score matching, we can write the marginal velocity field as a mixture of conditional velocity fields weighted by the posterior $p_t(z|x_t)$.

\begin{equation}
    \label{eqn:marginal_velocity_field}
    u_t(x_t) = \sum_z u_t(x_t|z) p_t(z|x_t)
\end{equation}

where the posterior $p_t(z|x_t)$ follows from Bayes rule as

\begin{equation}
    p_t(z|x_t) = \frac{p_t(x_t|z) \pi(z)}{p_t(x_t)}
\end{equation}

In \citet{gat2024discrete} their text experiments also use a masking process to achieve good performance.  The transition kernels $p_t(x_t|z)$ is set to be independent by component so that $p_t(x_t|z) = \prod_{i=1}^N p_t(x_t^{(i)}|z)$ where $N$ is the number of characters in the sequence. For the masking process the transition kernel is set to
\begin{equation}
    \label{eqn:mask_fm_trstn_krnl}
    p_t(x_t^{(i)}|x_0,x_1) = (1-\kappa_t) \delta_{x_0}(x_t^{(i)}) + \kappa_t \delta_{x_1}(x_t^{(i)})
\end{equation}

with $\kappa_1 = 1$ and $\kappa_0 = 0$. The conditional velocity field is set to
\begin{equation*}
    ...
\end{equation*}

The marginal velocity field, given current state $X_t$, from the path \eqref{eqn:mask_fm_trstn_krnl} is (using the posterior $p_t(x^i \mid X_t)$):
\begin{equation}
    \label{eqn:mask_fm_marginal_vel}
    u^i_t(x^i; X_t) = \frac{\dot{\kappa}_t}{1-\kappa_t} [p_t (x^i \mid X_t) - \delta_{X_t}(x^i) ] 
\end{equation}

These settings are very close to in diffusion and will allow us to use the same objectives for discrete flow matching.

\subsection{DEM and the (discrete) denoising score identity}
First, note that we can swap the order of arguments in the transition kernel as

\begin{align}
\label{eqn:symmetric_kernel}
    p_t(x_t|x_0) &= \textrm{Cat}(x_t; \bar{Q}_t^Tx_0) \nonumber \\
    &= x_0^T \bar{Q}_t x_t \nonumber \\
    &= (x_0^T \bar{Q}_t x_t)^T \nonumber \\
    &= \textrm{Cat}(x_0; \bar{Q}_tx_t) 
\end{align}

This allows us to write the marginal distribution $p_t(x_t)$ as a monte carlo estimator similar to DEM

\begin{align}
    p_t(x_t) &= \sum_{x_0}p_t(x_t|x_0)p_0(x_0) \nonumber \\
    &= \sum_{x_0}\textrm{Cat}(x_0; \bar{Q}_tx_t)p_0(x_0) \nonumber \\
    &= \mathbb{E}_{x_0 \sim \textrm{Cat}(x_0; \bar{Q}_tx_t)}[p_0(x_0)]
\end{align}

As in DEM, this can viewed as an IS approach where the proposal is $q(x_0) = \textrm{Cat}(x_0;\bar{Q}_tx_t)$ and importance weights $w(x_t) = \frac{p_t(x_t|x_0)p_0(x_0)}{q(x_0)} = p_0(x_0)$ with the $p_t(x_t|x_0)$ and $q(x_0)$ canceling by equation (\ref{eqn:symmetric_kernel}). Realizing that the DEM objective is a self-normalized importance sampling estimator, we can obtain a monte carlo estimator for the concrete score by leveraging the denoising concrete score identity as

\begin{align}
    s_{p_t}(x_t) &= \sum_{x_0} s_{p_t(\cdot|x_0)}(x_t)p_t(x_0|x_t) \nonumber \\
    &= \frac{\mathbb{E}_{x_0 \sim q_t(x_0)}\left[s_{p_t(\cdot|x_0)}(x_t) \frac{p_t(x_0|x_t)}{q_t(x_0)}\right]}{\mathbb{E}_{x_0\sim q_t(x_0)}\left[\frac{p_t(x_0|x_t)}{q_t(x_0)}\right]} \nonumber \\
    &= \frac{p_t(x_t)Z\mathbb{E}_{x_0 \sim q_t(x_0)}\left[s_{p_t(\cdot|x_0)}(x_t) \frac{p_t(x_t|x_0)R(x_0)}{q_t(x_0)}\right]}{p_t(x_t)Z\mathbb{E}_{x_0\sim q_t(x_0)}\left[\frac{p_t(x_t|x_0)R(x_0)}{q_t(x_0)}\right]} \nonumber \\
    &= \frac{\mathbb{E}_{x_0 \sim q_t(x_0)}\left[s_{p_t(\cdot|x_0)}(x_t) \frac{p_t(x_t|x_0)R(x_0)}{q_t(x_0)}\right]}{\mathbb{E}_{x_0\sim q_t(x_0)}\left[\frac{p_t(x_t|x_0)R(x_0)}{q_t(x_0)}\right]} \\
    &\approx \sum_{i=1}^n \bar{w}(x_{0,i})s_{p_t(\cdot|x_0)}(x_t)
\end{align}
where $x_{0,i} \sim q_t(x_t)$ and the $\bar{w}(x_{0,i})$ are normalized importance weights
\begin{equation}
    \bar{w}(x_{0,i}) = \frac{w(x_{0,i})}{\sum_{j=1}^n w(x_{0,j})}
\end{equation}
and $w(x_{0,i}) = \frac{p_t(x_t|x_0)R(x_0)}{q_t(x_0)}$. This estimator is trivially biased as it is an instance of self-normalized importance sampling. If we choose the proposal $q_t(x_0) = \textrm{Cat}(x_0; \bar{Q}_tx_t)$ then by equation (\ref{eqn:symmetric_kernel}) we recover an estimator almost identical to DEM (but with the denoising score identity) as the $p_t(x_t|x_0)$ and $q_t(x_0)$ terms in the importance weights cancel

\begin{equation}
    s_{p_t}(x_t) = \frac{\mathbb{E}_{x_0 \sim \textrm{Cat}(x_0; \bar{Q}_tx_t)}\left[s_{p_t(\cdot|x_0)}(x_t) R(x_0)\right]}{\mathbb{E}_{x_0\sim \textrm{Cat}(x_0; \bar{Q}_tx_t)}\left[R(x_0)\right]}
\end{equation}

This score estimate can be combined with the relation between concrete scores and posterior expectations to recover an $x_0$ prediction objective as we can write by equation (\ref{eqn:csm_posterior_exp})

\begin{equation}
    \mathbb{E}[x_0|x_t=m] = \frac{(1-\alpha_t)\mathbb{E}_{x_0 \sim q_t(x_0)}\left[s_{p_t(\cdot|x_0)}(x_t) \frac{p_t(x_t|x_0)R(x_0)}{q_t(x_0)}\right]}{\alpha_t\mathbb{E}_{x_0\sim q_t(x_0)}\left[\frac{p_t(x_t|x_0)R(x_0)}{q_t(x_0)}\right]}
\end{equation}

Since the marginal velocity field in discrete flow matching is also a conditional velocity field weighted by the posterior we can also apply this objective directly to the flow matching case, making sure to make the changes corresponding to the difference in transition kernels and (potentially) couplings.

\subsection{A discrete version of target score matching?}
While the above derivation makes use of the discrete version of the denoising score identity, the original DEM paper instead did self-normalized importance sampling using the target score identity \citet{de2024target} which allows us to write the marginal score $\nabla_{x_t} \log p_t(x_t)$ as a mixture of scores of $p_0$

\begin{equation}
    \label{eqn:target_score_identity}
    \nabla_{x_t}\log p_t(x_t) = \int \nabla_{x_0}\log p_0(x_0) p_t(x_0|x_t)dx_0
\end{equation}

The target score seems to have some desirable properties when we can evaluate due to the extra information given by the score of $p_0$. The derivation, pulled here from the original target score matching paper, makes use of the convolution property of the transition kernel

\begin{align*}
    \nabla p_t(x_t) &= \nabla (\mathcal{N}(0,\sigma_t^2I) * p_0)(x_t) \\
    &= (\mathcal{N}(0,\sigma_t^2I) * \nabla p_0)(x_t) \\
    &=\int \mathcal{N}(y;0,\sigma_t^2)\nabla p_0(x_t-y)dy \\
    &= \int \nabla \log p_0(x_t-y) p_0(x_t-y) \mathcal{N}(y;0,\sigma_t^2)dy
\end{align*}

so that

\begin{align*}
\nabla \log p_t(x_t) &= \int \nabla \log p_0(x_t-y) \frac{p_0(x_t-y) \mathcal{N}(y;0,\sigma_t^2)}{p_t(x_t)}dx_0 \\
&= \int \nabla \log p_0(x_0) \frac{p_0(x_0) \mathcal{N}(x_t;x_0,\sigma_t^2)}{p_t(x_t)}dx_0 \\
&= \int \nabla \log p_0(x_0) p_t(x_0|x_t)dx_0
\end{align*}
This derivation relies on the ability to write $p_t(x_t)$ as a convolution which is true in the case of Gaussian diffusion.  However, in the case of the masked forward process it is not clear whether we can write the marginal distribution as a convolution between the target density and transition kernel.  This is because the masked diffusion transition kernel can be seen as a sort of one way gate, and in particular one can write it as (with $\textbf{1}$ as a $K + 1$ dimensional vector of all ones and $e_m$ the one-hot vector for the mask token)

\begin{align*}
    p_t(x_t^{(i)}|x_0^{(i)}) &= x_0^{T}\bar{Q}_tx_t \\
    &= x_0^{T}(\alpha_tI + (1-\alpha_t) \textbf{1}e_m^T)x_t  \\
    &= \alpha_tx_0^{T}Ix_t + (1-\alpha_t) x_0^T\textbf{1}e_m^Tx_t \\
    &= \alpha_t \langle x_t, x_0\rangle + (1-\alpha_t) \langle x_t, e_m\rangle
\end{align*}
The issue comes from the $(1-\alpha_t)\langle x_t,e_m\rangle$ term as it acts as a gate based on $x_t$.  If there were a way to write this as a convolution we could also use the target score matching identity as an objective, but it is to this point unclear if this is possible (though would be quite nice if we figured it out!).

It is worth noting that it is indeed possible to write the marginal as a transition kernel for other noising processes. It was shown that for the uniform noising process we can indeed write the marginal $p_t$ as a convolution, in which case it seems possible to prove a version of the discrete score matching identity. Indeed, he wrote up a colab that shows this estimator to work well at estimating the true marginal score in a toy setting \url{https://colab.research.google.com/drive/1ixvbackBnw66uuytdYOmUVbNvhFR1VIg?usp=sharing}.

\subsection{In progress notes on importance sampling}
The optimal IS proposal for $p_t(x_t) = \sum_{x_0}p_t(x_t|x_0)p_0(x_0)$ is $q_t(x_t)^* \propto p_t(x_t|x_0)p_0(x_0)$, which, as it happens, is the posterior distribution $q_t(x_t)^* = p_t(x_0|x_t)$. A (somewhat annoying) fact about the posterior is that it is simply the reward distribution constrained to $x_0$s which are reachable from $x_t$. Writing $N(x_t) = \{x_0:x_0^{(i)}=x_t^{(i)} \textrm{ if $x_t^{(i)} \neq m$ and $x_0^{(i)} \neq m$ $\forall i$}\}$ as the set of $x_0$ which are reachable from $x_t$ and assuming that $x_0$ is fully unmasked we can see that

\begin{align*}
    p_t(x_t|x_0) &= \prod_i^n p_t(x_t^{(i)}|x_0^{(i)}) \\
    &= \prod_i^n x_0^{(i)T}\bar{Q}_t x_t^{(i)} \\
    &= \prod_i^n \alpha_t \langle x_0^{(i)},x_t^{(i)}\rangle + (1-\alpha_t) \langle e_m, x_t^{(i)}\rangle \\
    &= \alpha_t^n \prod_i^n \langle x_0^{(i)} - e_m,x_t^{(i)}\rangle + \frac{1}{\alpha_t}\langle e_m,x_t^{(i)}\rangle
\end{align*}

Assume that $x_t$ has $k$ masked tokens, then

\begin{align}
    p_t(x_t|x_0) &= \alpha_t^n \prod_i^n \langle x_0^{(i)} - e_m,x_t^{(i)}\rangle + \frac{1}{\alpha_t}\langle e_m,x_t^{(i)}\rangle \nonumber \\
    &= 1_{x_0 \in N(x_t)}\alpha_t^n \prod_i^k\frac{1-\alpha_t}{\alpha_t} \nonumber \\
    &= 1_{x_0\in N(x_t)} \alpha_t^{n-k}(1-\alpha_t)^k \label{eqn:closed_form_transition_kernel}
\end{align}

which means we can write $p_t(x_t) = \sum_{x_0 \in N(x_t)}p_t(x_t|x_0)p_0(x_0) = \alpha_t^{n-k}(1-\alpha_t)^k\sum_{x_0\in N(x_t)}p_0(x_0)$ so that the posterior is simply the reward distribution constrained to $N(x_t)$

\begin{align}
    p_t(x_0|x_t) &= \frac{p_t(x_t|x_0)p_0(x_0)}{p_t(x_t)}\nonumber \\
    &= 1_{x_0\in N(x_t)}\frac{\alpha_t^{n-k}(1-\alpha_t)^kp_0(x_0)}{\alpha_t^{n-k}(1-\alpha_t)^k\sum_{x\in N(x_t)}p_0(x)} \nonumber \\
&= 1_{x_0\in N(x_t)}\frac{R(x_0)}{\sum_{x\in N(x_t)}R(x)}
\end{align}

which is slightly annoying as it seems difficult to sample from. The way we sample from this to reduce the variance of the various MC estimators is not immediately clear, but there are several different ways to do it.

\clearpage

\subsection{DPO}
Given prompts $z$ and answers $y_1, y_2$, DPO stipulates that human preferences are distributed under a Bradley-Terry model

\begin{equation}
\label{eqn:bt_model}
    p^*(y_1>y_2|x) = \frac{\exp(r^*(z,y_1))}{\exp(r^*(z,y_1))+\exp(r^*(z,y_2))}
\end{equation}

where $r^*(z,y)$ is the optimal scalar reward function which measures the utility of answer $y$ given prompt $x$. In DPO (and other RLHF things) desire to sample from the fine-tuned Bayesian posterior distribution $p_{FT}(y|z) \propto p_{Ref}(y|z)\exp(r(z,y))$. Assume we have a diffusion model which can already condition on prompts $x$ so that we have a pretrained posterior model $p_{t,\theta}(x_0|x_t,z)$.  We know from Equation \ref{eqn:fine_tune_posterior} that, letting $\theta$ be the pre-trained parameters and $\phi$ the fine-tuned parameters, we can write 

\begin{equation*}
    q_{t,\phi}(x_0|x_t,z) = \frac{p_{t,\theta}(x_0|x_t,z)\exp(r(z,x_0))}{Z_t(x_t,z)}
\end{equation*}
so that we can express the reward function in terms of the pre-trained and fine-tuned as well as partition function

\begin{equation}
    \label{eqn:reward_equality}
    r(z,x_0) = \log \frac{q_{t,\phi}(x_0|x_t,z)}{p_{t,\theta}(x_0|x_t,z)} + \log Z_t(x_t,z)
\end{equation}

consider a particular $t$ and $x_t$ and plug it into Equation \ref{eqn:bt_model} to see that the normalizing constant $Z_t(x_t,z)$ cancels as

\begin{align}
    p^*(y_1>y_2|x) &= \frac{\exp(r^*(z,y_1))}{\exp(r^*(z,y_1))+\exp(r^*(z,y_2))} \nonumber \\
    &= \frac{\exp\left( \log \frac{q_{t,\phi}(y_1|x_t,z)}{p_{t,\theta}(y_1|x_t,z)} + \log Z_t(x_t,z) \right)}{\exp\left( \log \frac{q_{t,\phi}(y_1|x_t,z)}{p_{t,\theta}(y_1|x_t,z)} + \log Z_t(x_t,z) \right)+\exp\left( \log \frac{q_{t,\phi}(y_2|x_t,z)}{p_{t,\theta}(y_2|x_t,z)} + \log Z_t(x_t,z) \right)} \nonumber \\
    &= \frac{1}{1+\exp\left(\log \frac{q_{t,\phi}(y_2|x_t,z)}{p_{t,\theta}(y_2|x_t,z)} - \log \frac{q_{t,\phi}(y_1|x_t,z)}{p_{t,\theta}(y_1|x_t,z)}\right)} \nonumber \\
    &= \sigma\left(\log \frac{q_{t,\phi}(y_2|x_t,z)}{p_{t,\theta}(y_2|x_t,z)} - \log \frac{q_{t,\phi}(y_1|x_t,z)}{p_{t,\theta}(y_1|x_t,z)}\right) \label{eqn:diff_dpo}
\end{align}

Equation \ref{eqn:diff_dpo} is valid \textit{for any $x_t$}, as long as we're evaluating the posteriors $p_t(y_1|x_t,z)$ and $p_t(y_2|x_t,z)$ at the same $x_t$. With all this we're almost ready to define a new diffusion DPO objective, but there remains a question of how do we choose the $x_t$ we're going to regress with? One option would be to just flip a coin between $y_1$ and $y_2$ to choose whether to get $x_t \sim p_t(x_t|y_1)$ or $x_t \sim p_t(x_t|y_2)$. There are probably smarter choices than this, however. Regardless, assuming we have a dataset $\mathcal{D} = \{(y_{1,i},y_{2,i},z_i)\}_{i=1}^N$ we can write a maximum likelihood (maybe max likelihood?) objective as

\begin{equation}
    \mathcal{L}_{DPO}(q_{t,\phi};p_{t,\theta}) = -\mathbb{E}_{(y_1,y_2,z)\sim \mathcal{D}, i \sim \textrm{Bernoulli}(0.5), t \sim U[0,1], x_t\sim p_t(x_t|y_i)}\left[\sigma\left(\log \frac{q_{t,\phi}(y_2|x_t,z)}{p_{t,\theta}(y_2|x_t,z)} - \log \frac{q_{t,\phi}(y_1|x_t,z)}{p_{t,\theta}(y_1|x_t,z)}\right)\right]
\end{equation}

It is worth noting that these steps all apply in continuous space as well as Equation \ref{eqn:fine_tune_posterior} also applies for continuous space diffusion. But as it stands, this objective probably does not work for masked discrete diffusion on account of the requirement of using the same $x_t$ for both $y_1$ and $y_2$. Consider an $x_t \sim p_t(x_t|y_1)$. When we evaluate $x_t$ and $y_1$ under the pretrained model with $p_{t,\theta}(y_1|x_t,z)$ we should find that $p_{t,\theta}(y_1|x_t,z)$ has mass and we are happy. However, for $p_{t,\theta}(y_2|x_t,z)$ this is probably not the case as more than likely $x_t$ will have unmasked tokens different than those of $y_2$, leading $p_{t,\theta}(y_2|x_t,z)$ to be equal to $0$. This problem may not be such a big deal in continuous space. If we instead used an $x_t^{(1)} \sim p_t(x_t|y_1)$ and $x_t^{(2)} \sim p_t(x_t|y_2)$ we'd end up with

\begin{align*}
    p^*(y_1>y_2|x) &= \frac{\exp(r^*(z,y_1))}{\exp(r^*(z,y_1))+\exp(r^*(z,y_2))} \nonumber \\
    &= \frac{\exp\left( \log \frac{q_{t,\phi}(y_1|x_t^{(1)},z)}{p_{t,\theta}(y_1|x_t^{(1)},z)} + \log Z_t(x_t^{(1)},z) \right)}{\exp\left( \log \frac{q_{t,\phi}(y_1|x_t^{(1)},z)}{p_{t,\theta}(y_1|x_t^{(1)},z)} + \log Z_t(x_t^{(1)},z) \right)+\exp\left( \log \frac{q_{t,\phi}(y_2|x_t^{(2)},z)}{p_{t,\theta}(y_2|x_t^{(2)},z)} + \log Z_t(x_t^{(2)},z) \right)} \nonumber \\
    &= \frac{1}{1+\exp\left(\log \frac{q_{t,\phi}(y_2|x_t^{(2)},z)}{p_{t,\theta}(y_2|x_t^{(2)},z)} - \log \frac{q_{t,\phi}(y_1|x_t^{(1)},z)}{p_{t,\theta}(y_1|x_t^{(1)},z)} + \log \frac{Z_t(x_t^{(2)},z)}{Z_t(x_t^{(1)},z)}\right)} \nonumber \\ 
    &= \sigma\left(\log \frac{q_{t,\phi}(y_2|x_t^{(2)},z)}{p_{t,\theta}(y_2|x_t^{(2)},z)} - \log \frac{q_{t,\phi}(y_1|x_t^{(1)},z)}{p_{t,\theta}(y_1|x_t^{(1)},z)} + \log \frac{Z_t(x_t^{(2)},z)}{Z_t(x_t^{(1)},z)}\right)
\end{align*}

Is there a way to simplify that $\log \frac{Z_t(x_t^{(2)},z)}{Z_t(x_t^{(1)},z)}$ term? We can use the MC estimator in Equation \ref{eqn:mc_estimator_log_Z} with the drawback that we'd then require knowing the reward function $r(x,z)$.
It turns out it is possible to approximate the $\log \frac{Z_t(x_t^{(2)},z)}{Z_t(x_t^{(1)},z)}$ using the MC estimator in Equation \ref{eqn:mc_estimator_log_Z} \textit{if we require that we have access to the reward function $r(x,z)$}. Using this, we can approximate the log partition quotient as $\log \frac{Z_t(x_t^{(2)},z)}{Z_t(x_t^{(1)},z)} \approx \mathcal{Z}_{K,t}(x_t^{(2)}) - \mathcal{Z}_{K,t}(x_t^{(1)})$. Finally, we could define a discrete diffusion maximum likelihood DPO objective which does \textit{not} suffer from the sharing an $x_t$ issue as

\begin{equation}
    \mathcal{L}_{DPO}(q_{t,\phi};p_{t,\theta}) = -\mathbb{E}_{(y_1,y_2,z), t , x_t^{(1)}, x_t^{(2)}}\left[\sigma\left(\log \frac{q_{t,\phi}(y_2|x_t^{(2)},z)}{p_{t,\theta}(y_2|x_t^{(2)},z)} - \log \frac{q_{t,\phi}(y_1|x_t^{(1)},z)}{p_{t,\theta}(y_1|x_t^{(1)},z)} + \mathcal{Z}_{K,t}(x_t^{(2)}) - \mathcal{Z}_{K,t}(x_t^{(1)})\right)\right]
\end{equation}

where the expectation is taken with respect to $(y_1,y_2,z) \sim \mathcal{D}, t \sim U[0,1], x_t^{(1)}\sim p_t(x_t|y_1), x_t^{(2)}\sim p_t(x_t|y_2)$.
}


\cut{
\subsection{Fast Approximation to the sub-trajectory KL }

We now state and prove a useful Lemma that leads to a modified objective that is computationally easier to compute than~\eqref{eqn:path_KL}.

\begin{restatable}{lemma}{varsubtraj}
\label{lemma:varsubtraj}
\looseness=-1
Let the simulation-free sub-trajectory loss $\gL_{\tau}$ be,
\begin{equation*}
    \gL_{\tau} = \mathbb{E}_{t, \bx_t}\left[ \left\| t \mathbb{E}_{s, \bx_s, \bx_{s-\gamma}} \left[\log q_{\theta}(\bx_{s-\gamma}|\bx_s,   \hat{\bx}_0) - \log p_t^{\text{pre}}(\bx_{s-\gamma},| \bx_s, \hat{\bx}^{\text{pre}}_0) + \kappa \right] \right\|^2_2\right], 
\end{equation*}
Now consider the following different sub-trajectory objective $\widehat{\gL}_{\tau}$,
\begin{equation*}
    \widehat{\gL}_{\tau} = t^2\mathbb{E}_{t, s, \bx_t,\bx_s, \bx_{s-\gamma}}\left[ \left\| \log q_{\theta}(\bx_{s-\gamma}|\bx_s,   \hat{\bx}_0) - \log p_t^{\text{pre}}(\bx_{s-\gamma},| \bx_s, \hat{\bx}^{\text{pre}}_0)  + \kappa \right\|^2_2\right].
\end{equation*}
Assume the random variables in each objective are integrable over the domain $\gV$, then the two objectives differ by the expected log variance of the finetuned and pre-trained models reverse the process at every point in time over the sub-trajectory, i.e.
\begin{equation*}
 \widehat{\gL}_{\tau} -  \gL_{\tau}  = t^2\mathbb{E}_{t, \bx_t, s, \bx_s, \bx_{s-\gamma}}\left[\Var\left(\log \left(\frac{q_{\theta}(\bx_{s-\gamma}|\bx_s, \hat{\bx}_0)}{p_t^{\text{pre}}(\bx_{s-\gamma},| \bx_s, \hat{\bx}^{\text{pre}}_0)}\right)\right)\right].
    \end{equation*}
\end{restatable}

\begin{proof}
   We start by expanding $\gL_{\tau}$,
   \begin{align}
        \gL_{\tau} &= \mathbb{E}_{t, \bx_t}\Big[ \left\| t \mathbb{E}_{s, \bx_s, \bx_{s-\gamma}} [\log q_{\theta}(\bx_{s-\gamma}|\bx_s,   \hat{\bx}_0) - \log p_t^{\text{pre}}(\bx_{s-\gamma},| \bx_s, \hat{\bx}^{\text{pre}}_0)  + \kappa] \right\|^2_2\Big] \nonumber \\
        &= t^2 \mathbb{E}_{t, \bx_t}\Big[ \left\|\mathbb{E}_{s, \bx_s, \bx_{s-\gamma}} [\log q_{\theta}(\bx_{s-\gamma}|\bx_s,   \hat{\bx}_0) - \log p_t^{\text{pre}}(\bx_{s-\gamma},| \bx_s, \hat{\bx}^{\text{pre}}_0)]  + \kappa \right\|^2_2\Big] \nonumber \\
        &= t^2 \mathbb{E}_{t, \bx_t}\Big[ \left(\mathbb{E}_{s, \bx_s, \bx_{s-\gamma}} [\log q_{\theta}(\bx_{s-\gamma}|\bx_s,   \hat{\bx}_0) - \log p_t^{\text{pre}}(\bx_{s-\gamma},| \bx_s, \hat{\bx}^{\text{pre}}_0)]\right)^2  \nonumber \\
        &+  2\left\langle \mathbb{E}_{s, \bx_s, \bx_{s-\gamma}}[\log q_{\theta}(\bx_{s-\gamma}|\bx_s,   \hat{\bx}_0) - \log p^{\text{pre}}(\bx_{s-\gamma},| \bx_s, \hat{\bx}^{\text{pre}}_0)], \kappa \right\rangle + \kappa^2 \Big] \nonumber\\
        &= t^2 \mathbb{E}_{t, \bx_t}\Big[ \left(\mathbb{E}_{s, \bx_s, \bx_{s-\gamma}} [\log q_{\theta}(\bx_{s-\gamma}|\bx_s,   \hat{\bx}_0) - \log p_t^{\text{pre}}(\bx_{s-\gamma},| \bx_s, \hat{\bx}^{\text{pre}}_0)]\right)^2  \nonumber \\
        &+  2 \mathbb{E}_{s, \bx_s, \bx_{s-\gamma}}[\left\langle\log q_{\theta}(\bx_{s-\gamma}|\bx_s,   \hat{\bx}_0) - \log p_t^{\text{pre}}(\bx_{s-\gamma},| \bx_s, \hat{\bx}^{\text{pre}}_0), \kappa \right\rangle] + \mathbb{E}_{s, \bx_s, \bx_{s-\gamma}}[\kappa^2] \Big],
   \end{align}
   where in the last line we used the linearity of the expectation to commute with the inner product and the fact that the expectation of a constant $\kappa$ is the constant itself. We next expand the square of $\hat{\gL}_{\tau}$,
    \begin{align}
       \hat{\gL}_{\tau} = & t^2\mathbb{E}_{t, s, \bx_t, \bx_s, \bx_{s-\gamma}}\left[ \left\| \log q_{\theta}(\bx_{s-\gamma}|\bx_s, \hat{\bx}_0) - \log p_t^{\text{pre}}(\bx_{s-\gamma},| \bx_s,\hat{\bx}^{\text{pre}}_0) + \kappa) \right\|^2_2\right]   \nonumber \\
       = &t^2\mathbb{E}_{t, s, \bx_t, \bx_s, \bx_{s-\gamma}}\Big[ (\log q_{\theta}(\bx_{s-\gamma}|\bx_s,   \hat{\bx}_0) - \log p_t^{\text{pre}}(\bx_{s-\gamma},| \bx_s, \hat{\bx}^{\text{pre}}_0))^2 \nonumber\\
       & + 2 \left\langle \log q_{\theta}(\bx_{s-\gamma}|\bx_s,   \hat{\bx}_0) - \log p_t^{\text{pre}}(\bx_{s-\gamma},| \bx_s, \hat{\bx}^{\text{pre}}_0), \kappa \right\rangle + \kappa^2\Big]
    \end{align}
Now we compute the difference, $\hat{\gL}_{\tau} - \gL_{\tau} $,

\begin{align}
     \hat{\gL}_{\tau} - \gL_{\tau}  &= t^2 \mathbb{E}_{t, \bx_t}\Big[ -\left(\mathbb{E}_{s, \bx_s, \bx_{s-\gamma}} [\log q_{\theta}(\bx_{s-\gamma}|\bx_s,   \hat{\bx}_0) - \log p^{\text{pre}}(\bx_{s-\gamma},| \bx_s, \hat{\bx}^{\text{pre}}_0)]\right)^2 \nonumber \\
     &  \mathbb{E}_{s, \bx_s, \bx_{s-\gamma}}[(\log q_{\theta}(\bx_{s-\gamma}|\bx_s,   \hat{\bx}_0) - \log p_t^{\text{pre}}(\bx_{s-\gamma},| \bx_s, \hat{\bx}^{\text{pre}}_0))^2]\Big] \nonumber \\
     &= t^2\mathbb{E}_{t, \bx_t, s, \bx_s, \bx_{s-\gamma}}\left[\Var\left(\log \left(\frac{q_{\theta}(\bx_{s-\gamma}|\bx_s,   \hat{\bx}_0)}{p_t^{\text{pre}}(\bx_{s-\gamma},| \bx_s, \hat{\bx}^{\text{pre}}_0)}\right)\right)\right],
\end{align}
where in the last line we used the identity $\Var(X) = \mathbb{E}[X^2] - (\mathbb{E}[X])^2$.
\end{proof}
Using Lemma \ref{lemma:varsubtraj} we can write, 
\begin{equation}
\gL_{\tau} =  \hat{\gL}_{\tau} - t^2\mathbb{E}_{t, \bx_t, s, \bx_s, \bx_{s-\gamma}}\left[\Var\left(\log \left(\frac{q_{\theta}(\bx_{s-\gamma}|\bx_s,   \hat{\bx}_0)}{p_t^{\text{pre}}(\bx_{s-\gamma},| \bx_s, \hat{\bx}^{\text{pre}}_0)}\right)\right)\right] . 
\label{eqn:variance_in_loss_tau}
\end{equation}

Focusing on $\hat{\gL}_{\tau}$, 
\begin{align}
    \hat{\gL}_{\tau} &= t^2\mathbb{E}_{t, \bx_t, s, \bx_s, \bx_{s-\gamma}}\left[ \left\| \log q_{\theta}(\bx_{s-\gamma}|\bx_s,   \hat{\bx}_0) - \log p_t^{\text{pre}}(\bx_{s-\gamma},| \bx_s, \hat{\bx}^{\text{pre}}_0) + \kappa \right\|^2_2\right]   \nonumber \\
    & = t^2\mathbb{E}_{t, \bx_t, s, \bx_s, \bx_{s-\gamma}}\left[ \left\| \log q_{\theta}(\bx_{s-\gamma}|\bx_s,   \hat{\bx}_0) - \log p_t^{\text{pre}}(\bx_{s-\gamma},| \bx_s, \hat{\bx}^{\text{pre}}_0)  + \kappa \right\|^2_2\right].
\end{align}
\looseness=-1
We now use the fact that the transition to $\bx_s \to \bx_{s-\gamma}$ is effectively uniformly picking an unmasked token to be unmasked. This is because each index $i \in [n]$ in the sequence undergoes the same noise schedule (see \eqref{eqn:parametrized_posterior_mdlm}). We can convert the expectation over $\bx_{s-\gamma} \sim p(\bx_{s-\gamma}| \bx_s)$ to a sum over an index set $P$ of unmasked tokens that are unmasked by the realization $\bx_{s-\gamma}$.
\begin{align}
    \hat{\gL}_{\tau} &=
    &= \frac{t^2}{|P|} \mathbb{E}_{t, \bx_t, s, \bx_s} \sum_{\bx_{s-\gamma} \in P} \left[ \left\| \left(\log q_{\theta}(\bx_{s-\gamma}|\bx_s,   \hat{\bx}_0) - \log p_t^{\text{pre}}(\bx_{s-\gamma},| \bx_s, \hat{\bx}^{\text{pre}}_0)\right)  + \kappa \right\|^2_2\right].
\end{align}
Plugging this formula back into \eqref{eqn:variance_in_loss_tau} we can compute a simulation free upper bounda to $\gL_{\tau}$,

\begin{align}
\gL_{\tau} &=  -t^2\mathbb{E}_{t, \bx_t, s, \bx_s, \bx_{s-\gamma}}\left[\Var\left(\log \left(\frac{q_{\theta}(\bx_{s-\gamma}|\bx_s,   \hat{\bx}_0)}{p_t^{\text{pre}}(\bx_{s-\gamma},| \bx_s, \hat{\bx}^{\text{pre}}_0)}\right)\right)\right]  \nonumber \\
&+  \frac{t^2}{|P|} \mathbb{E}_{t, \bx_t, s, \bx_s} \sum_{\bx_{s-\gamma} \in P} \left[ \left\| \left(\log q_{\theta}(\bx_{s-\gamma}|\bx_s,   \hat{\bx}_0) - \log p_t^{\text{pre}}(\bx_{s-\gamma},| \bx_s, \hat{\bx}^{\text{pre}}_0)\right)  + \kappa \right\|^2_2\right] 
\end{align}

\begin{align}
\gL_{\tau} =&
\frac{1}{|P|}
\mathbb{E}_{t, \bx_t, s, \bx_s, \bx_{s-\gamma}} 
\left[ \left\| \left(
    \log q_{\theta}(\bx_{s-\gamma}|\bx_s, \hat{\bx}_0)
    - \log p_t^{\text{pre}}(\bx_{s-\gamma} | \bx_s, \hat{\bx}^{\text{pre}}_0)\right)
    + \kappa \right\|^2_2\right] \nonumber \\
    &-t^2\mathbb{E}_{t, \bx_t, s, \bx_s, \bx_{s-\gamma}}\left[\Var\left(\log \left(\frac{q_{\theta}(\bx_{s-\gamma}|\bx_s,   \hat{\bx}_0)}{p_t^{\text{pre}}(\bx_{s-\gamma},| \bx_s, \hat{\bx}^{\text{pre}}_0)}
\right)\right)\right] \nonumber \\
\widehat{\gL}_{\tau} =&
\mathbb{E}_{t, \bx_t, \gS, s, \bx_s, \bx_{s-\gamma}} 
t^2 \left[ \left\| f_s
    + \kappa \right\|^2_2
    -\frac{1}{L - 1}\left[  \sum_{s \in \gS} \left[f_s - \frac{1}{L}\sum_{u \in \gS} f_u \right ]^2 \right]\right] \nonumber \\
\end{align}

Let
\begin{equation}
    f_s = 
    \log q_{\theta}(\bx_{s-\gamma}|\bx_s, \hat{\bx}_0)
    - \log p_t^{\text{pre}}(\bx_{s-\gamma} | \bx_s, \hat{\bx}^{\text{pre}}_0)
\end{equation}

}

\end{document}